\newcommand{\norm}[1]{\left\lVert#1\right\rVert}
\newcommand{\dd}{\mathrm{d}}
\newtheorem{thm}{Theorem}
\newtheorem{lem}{Lemma}
\begin{document}

%

%
\runningauthor{Xuepeng Shi, Pengfei Zheng, Adam Ding, Yuan Gao, Weizhong Zhang}

\twocolumn[

\aistatstitle{Finding Dynamics Preserving Adversarial Winning Tickets}
\vspace{-20pt}
\aistatsauthor{ Xupeng Shi$^{*1}$  \And Pengfei Zheng$^{*2}$  \And  A. Adam Ding$^1$ \And Yuan Gao$^{3}$ \And Weizhong Zhang$^{\dag 4}$ }
\aistatsaddress{ } 
]

\begin{abstract}
  Modern deep neural networks (DNNs) are vulnerable to adversarial attacks and adversarial training has been shown to be a promising method for improving the adversarial robustness of DNNs. Pruning methods have been considered in adversarial context to reduce model capacity and improve adversarial robustness simultaneously in training. Existing adversarial pruning methods generally mimic the classical pruning methods for natural training, which follow the three-stage 'training-pruning-fine-tuning' pipelines. We observe that such pruning methods do not necessarily preserve the dynamics of dense networks, making it potentially hard to be fine-tuned to compensate the accuracy degradation in pruning. Based on recent works of \textit{Neural Tangent Kernel} (NTK), we systematically study the dynamics of adversarial training and prove the existence of trainable sparse sub-network at initialization which can be trained to be adversarial robust from scratch. This theoretically verifies the \textit{lottery ticket hypothesis} in adversarial context and we refer such sub-network structure as \textit{Adversarial Winning Ticket} (AWT). We also show empirical evidences that AWT preserves the dynamics of adversarial training and achieve equal performance as dense adversarial training. 
\end{abstract}
\section{Introduction}\label{sec:intro}
Deep neural networks (DNN) are widely used as the state-of-art machine learning classification systems due to its great performance gains in recent years. Meanwhile, as pointed out in \citet{Szegedy}, state-of-the-art DNN are usually vulnerable to attacks by \textit{adversarial examples}, inputs that are distinguishable to human eyes but can fool classifiers to make arbitrary predictions. Such undesirable property may prohibit DNNs from being applied to security-sensitive applications. Various of adversarial defense methods \citep{ExplainAd, distillation, DefenseGAN, AdvRobustMNIST, DistributionalRobustness} were then proposed to prevent adversarial examples attack. However, most of the defense methods were quickly broken by new adversarial attack methods. \textit{Adversarial training}, proposed in \citet{madry}, is one among the few that remains resistant to adversarial attacks. 

On the other hand, DNNs are often found to be highly over-parameterized. Network pruning 
is shown to be an outstanding method which significantly reduces the model size. Typical pruning algorithms follow the three-stage 'training-pruning-fine-tuning' pipelines, where 'unimportant' weights are pruned according to certain pruning strategies, such as magnitudes of weights. 
However, as observed in \citet{liu2018rethinking}, fine-tuning a pruned model with inherited weights only gives comparable or worse performance than training that model with randomly initialized weights, which suggests that the inherited 'important' weights are not necessarily useful for fine-tuning. We argue below that the change of model outputs dynamics is a potential reason for this phenomenon. 

\begin{figure*}[t!]
    \centering
    \subfigure[]{\includegraphics[scale=0.2]{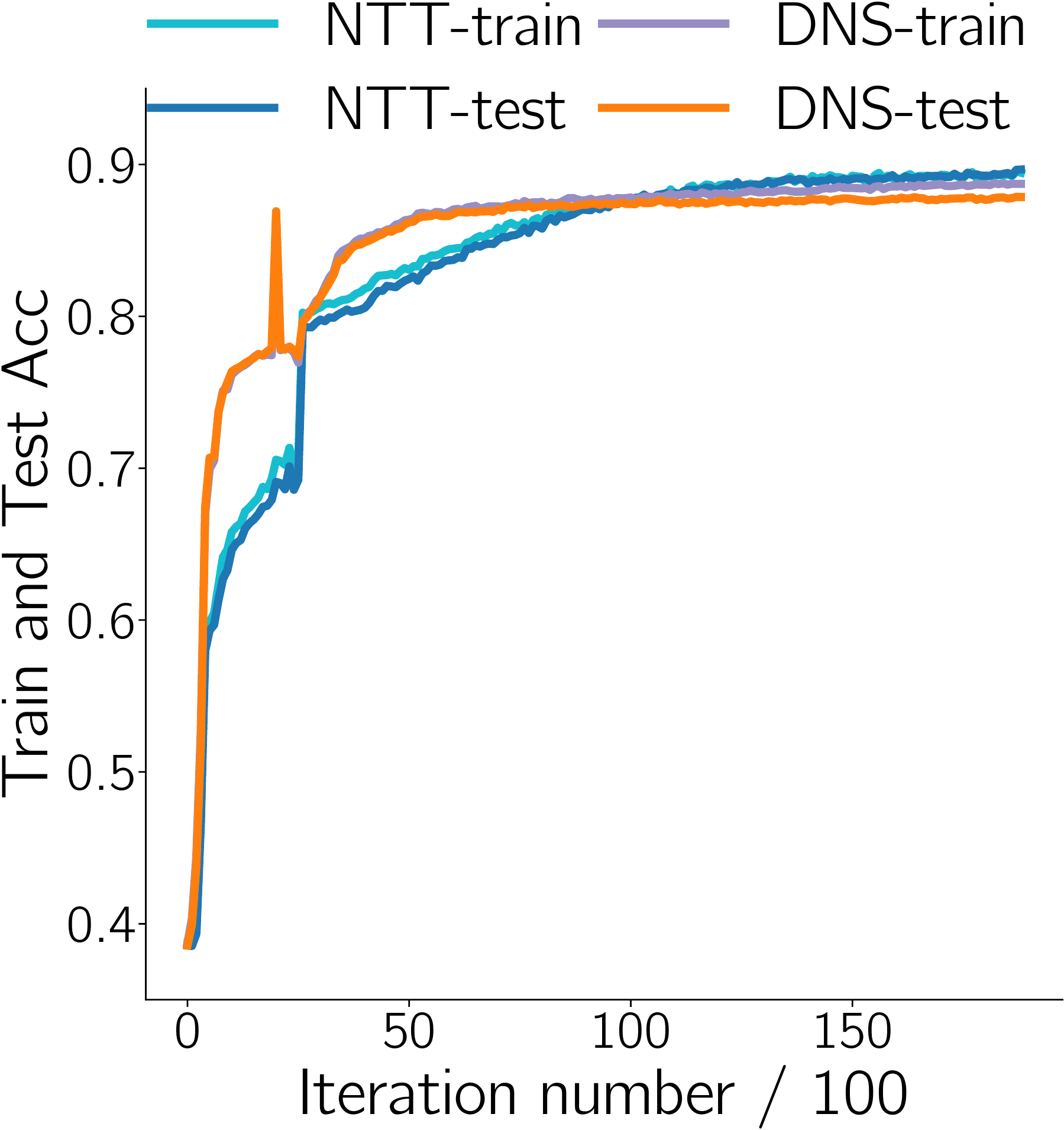}}
    \subfigure[]{\includegraphics[scale=0.2]{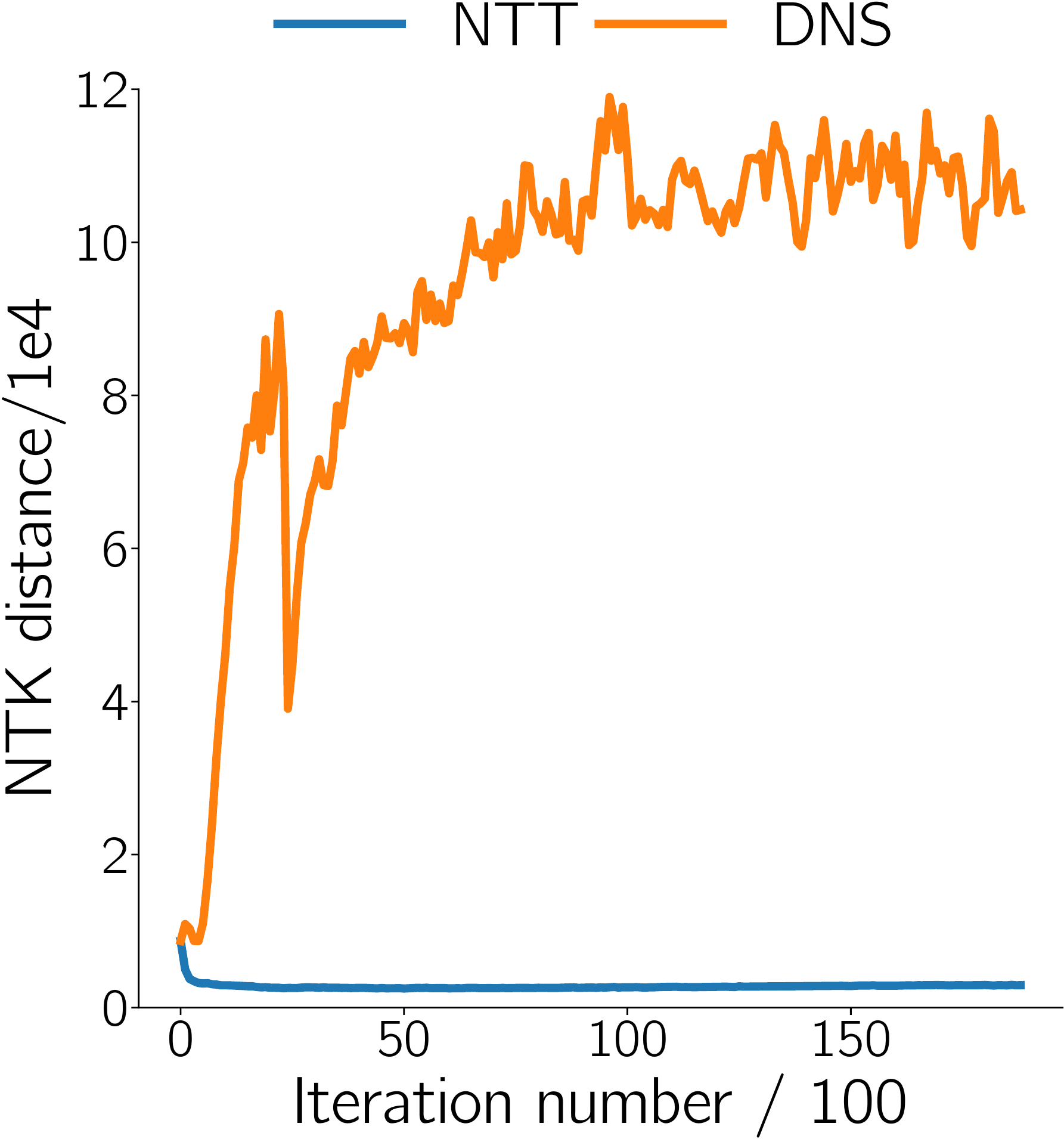}}
    \subfigure[]{\includegraphics[scale=0.2]{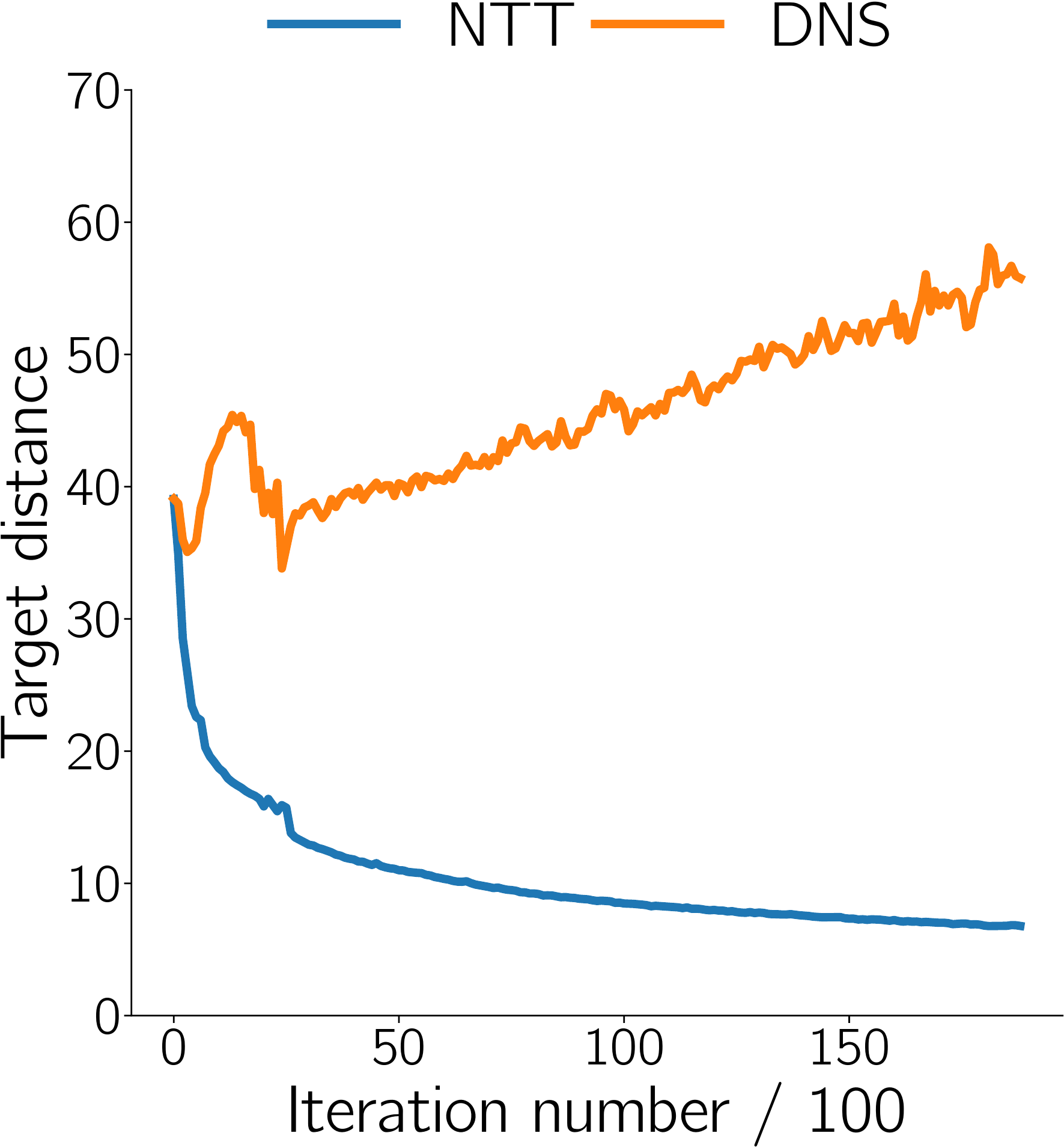}}
    \subfigure[]{\includegraphics[scale=0.2]{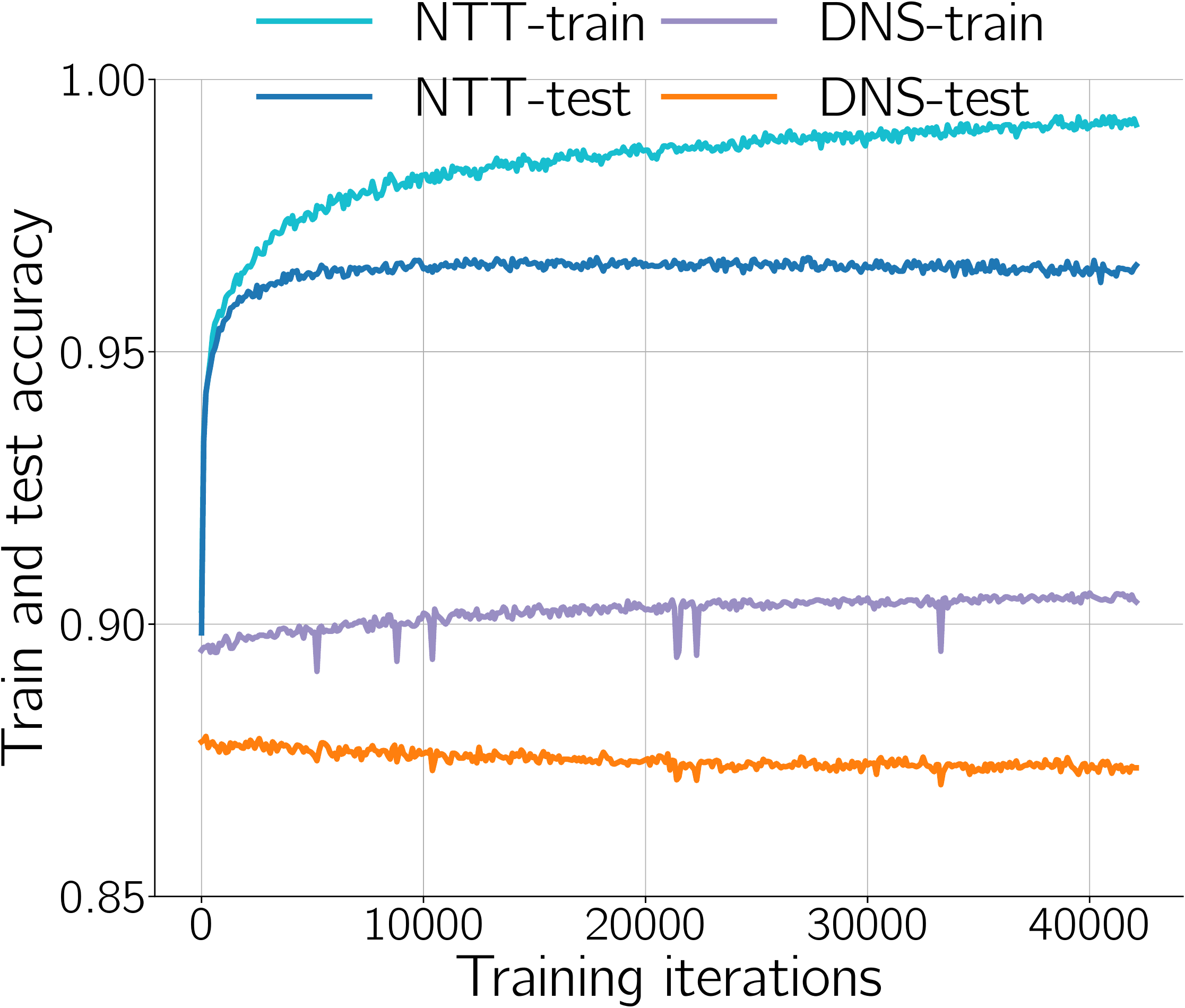}}
    \caption{(a)-(c) Statistics of NTT and DNS during mask searching. (d) Train and test accuracy during fine-tuning.}
    \label{fig:NTK}
\end{figure*}

As proposed in \citet{EvolveLinear}, the dynamics of model outputs can be completely described by the \textit{Neural Tangent Kernel} (NTK) and the initial predictions. Hence the difference of dynamics between two neural networks can be quantified by the difference of their NTKs and initial predictions. Based on this result, \citet{NTKLTH} proposed \textit{Neural Tangent Transfer} (NTT) to find trainable sparse sub-network structure which preserves the dynamics of model outputs by controlling the NTK distance and target distance between dense and sparse networks. In Figure \ref{fig:NTK}, we empirically compare various statistics of NTT with the well-known \textit{Dynamics Network Surgery} (DNS) proposed in \citet{guo2016dynamic} during mask searching and retraining/fine-tuning procedures. In Figure \ref{fig:NTK} (a), train and test accuracy increase during mask search for both NTT and DNS. This indicates that both methods successfully find sparse network with good performance. However, as shown in Figure \ref{fig:NTK} (b) and (c), the NTK distances and target distances between dense and sparse networks obtained by NTT remain in a low scale, while for DNS these two quantities blow up. This indicates that DNS flows in a different way as NTT, which lead to a different dynamics as the original dense network. As a result, we can see in Figure \ref{fig:NTK} (d) that the sparse network found by DNS is harder to be fine-tuned, while we can train the sparse network obtained by NTT from scratch to get a better performance. This observation suggests that preserving the dynamics of outputs does help to find trainable sparse structures. Experimental details will be presented in the supplementary materials.

On the other hand, \citet{LTH} conjectured the \textit{Lottery Ticket Hypothesis} (LTH), which states that there exists sub-network structure which can reach comparable performance with the original network if trained in isolation. Such sub-network is called \textit{winning ticket}. The existence of winning tickets allows us to train a sparse network from scratch with desirable performance. In particular, NTT as a foresight pruning method, provides a verification of LTH in natural training scenarios. Inspired by this observation, we consider the existence of winning ticket in adversarial context, which also preserves the dynamics of adversarial training. We call such a sparse structure an \textit{Adversarial Winning Ticket} (AWT). The benefit of looking for AWT is that its robustness is guaranteed by robustness of dense adversarial training, which has been theoretically \citep{AdvAnalysis} and empirically \citep{madry} justified. 

We briefly summarize the the contributions of this paper as follows: 
\begin{itemize}
    \item We systematically study the dynamics of adversarial training and propose a new kernel to quantify the dynamics. We refer this kernel as \textit{Mixed Tangent Kernel} (MTK).
    \item We propose a method to find AWT, which can be used to verify the LTH in adversarial context. Unlike other pruning methods in adversarial setting, AWT is obtained at initialization.  
    \item We conduct various experiments on real datasets which show that when fully trained, the AWT found by our method can achieves comparable performance when compared to dense adversarial training. These results verify the LTH empirically. 
\end{itemize}

The rest of this paper is organized as follows: In Section \ref{sec: related}, we discuss the related works. In Section \ref{sec: theory}, we develop the theory of adversarial training dynamics and state the existence theorem. In Section \ref{sec: exp} we experiment on real datasets to test the performance of AWT. Finally we conclude and discuss some possible future works in Section \ref{sec: discussion}. Proof details and additional experimental results are given in the Appendix. 

\section{Related Works}\label{sec: related}
\subsection{Adversarial Robust Learning}\label{sec:at}
The study of adversarial examples naturally splits into two areas: attack and defense. Adversarial attack methods aim to fool state-of-the-art networks. In general, attack methods consist of white-box attack and black-box attack, depending on how much information about the model we can have. White-box attacks are widely used in generating adversarial examples for training or testing model robustness where we can have all information about the model. This includes \textit{Fast Gradient Sign Method} (FGSM) \citep{ExplainAd}, \textit{Deep Fool} \citep{deepfool}, \textit{AutoAttack} \citep{autoattack} and so on. Black-box attacks \citep{zoo, SurrogateBlackAttack} are usually developed to attack model in physical world, therefore we have very limited information about the model structure or parameters. 

Meanwhile, defense methods have been studied to train an adversarial robust network which can prevent attacks from adversarial examples. Augmentation with adversarial examples generated by strong attack algorithms has been popular in the literature. \citet{madry} motivates \textit{Projected Gradient Descent} (PGD) as a universal 'first order adversary' and solve a min-max problem by iteratively generating adversarial examples and parameter updating on adversarial examples. Such method is referred as \textit{adversarial training} (AT). The convergence and performance of AT have been theoretically justified by recent works \citep{AdvAnalysis, zhang2020over, gao2019convergence}. Also, methods \citep{FreeAdv, FastAdv} have been developed to speed up the training of AT for large scale datasets such as ImageNet. 

\subsection{Sparse Learning}\label{sec:adv-pruning}
\paragraph{Pruning Methods} Network pruning \citep{han2015learning, guo2016dynamic,zeng2018mlprune,li2016pruning,luo2017thinet,he2017channel,zhu2017prune,zhou2021efficient,zhou2021effective} has been extensively studied in recent years for reducing  model size and improve the inference efficiency of deep neural networks. Since it is a widely-recognized property that modern neural networks are always over-parameterized, pruning methods are developed to remove unimportant parameters in the fully trained dense networks to alleviate such redundancy. According to the  granularity of pruning, existing pruning methods can be roughly divided into two categories, i.e., unstructured pruning and structured pruning. The former one is also called weight pruning, which removes the unimportant parameters in an unstructured way, that is, any element in the weight tensor could be removed. The latter one removes all the weights in a certain group together, such as kernel and filter. Since structure is taken into account in pruning, the pruned networks obtained by structured pruning are available for efficient inference on standard computation devices. In both structured and unstructured pruning methods, their key idea is to propose a proper criterion (e.g., magnitude of the weight) to evaluate the importance of the weight, kernel or filter and then remove the unimportant ones. he results in the literature \citep{guo2016dynamic,liu2018rethinking,zeng2018mlprune,li2016pruning} demonstrate that pruning methods can significantly improve the inference efficiency of DNNs with minimal performance degradation, making the deployment of modern neural networks on resource limited devices possible.

Along the research line of LTH, recent works, e.g., SNIP \citep{snip} and GraSP \citep{grasp},
empirically show that it is possible to find a winning ticket at intialization step, without iteratively training and pruning procedure as the classical pruning methods. The key idea is to find a sub-network, which preserves the gradient flow at initialization. NTT \citep{NTKLTH} utilizes the NTK theory and prune the weights by preserving the training dynamics of model outputs, which is captured by a system of differential equations.  

\paragraph{Adversarial Pruning Methods} Recent works by \citet{guo2018sparse} have proven sparsity can improve adversarial robustness. A typical way of verifying the \textit{Lottery Ticket Hypothesis} (LTH) is finding the winning ticket by iteratively training and pruning. Such strategy is also considered in adversarial context \citep{AdvLTH, wang2020achieving, li2020towards, gilles2020lottery}, with natural training replaced by adversarial training. Other score based pruning methods have also been considered \citep{sehwag2020hydra}. Recent work \citep{fu2021drawing} also considered sub-network structure with inborn robustness without training. 

Other works bring in the model compression methods into sparse adversarial training. \citet{gui2019model} integrates pruning, low-rank factorization and quantization into a unified flexible structural constraint. \citet{AdvModelCompression} proposes concurrent weight pruning to reach robustness. Both works introduce certain sparse constraints and solve the optimization problem under \textit{alternating direction method of multipliers} (ADMM) framework. 

\subsection{Neural Tangent Kernel}\label{sec:ntk}
Recent works by~\citet{ntk} consider the training dynamics of deep neural network outputs and proposed the \textit{Neural Tangent Kernel} (NTK). \citet{ntk} shows under the infinite width assumption, NTK converges to a deterministic limiting kernel. Hence the training is stable under NTK. NTK theory has been widely used in analyzing the behavior of neural networks. \citet{EvolveLinear} proves infinitely wide \textit{multilayer perceptrons} (MLP) evolve as linear model, which can be described as the solution of a different equation determined by the NTK at initialization. \citet{arora2019exact} further shows that ultra-wide MLPs behave as kernel regression model under NTK. These results have also been applied to different areas in deep learning, such as foresight network pruning \citep{NTKLTH}, federated learning~\citep{huang2021fl} and so on.

\section{Dynamics Preserving Sub-Networks}\label{sec: theory}
In this section, we verify the \textit{Lottery Ticket Hypothesis} (LTH) in adversarial context by showing the existence of \textit{Adversarial Winning Ticket} (AWT). We first derive the equations describing the dynamics of adversarial training. Then we propose the optimization problem of finding the AWT by controlling the sparse adversarial training dynamics. Finally we prove an error bound between the dense model outputs and the sparse model outputs, which implies the AWT has the desired theoretical property. 

\subsection{Dynamics of Adversarial Training}\label{sec: dyn-theo}
Let $\mathcal{D}=X\times Y=\{(x_1,y_1),\cdots, (x_N,y_N)\}$ be the empirical data distribution, $f_\theta(x)\in\mathbb{R}^{k\times 1}$ the network function defined by a fully-connected network\footnote{We make this assumption because the NTK theory we are going to apply is valid for fully-connected networks only.},  and $f_\theta(X)=\mathrm{vec}\big([f_\theta(x)]_{x\in X}\big)\in\mathbb{R}^{k|\mathcal{D}|\times 1}$ be the model outputs on training data. 

Recall that adversarial training solves the following optimization problem:
\begin{equation}\label{eq: adv op}
\begin{split}
   \min_\theta \mathcal{L}&=\underset{{(x,y)\sim\mathcal{D}}}{\mathbb{E}}\max_{r\in S_\varepsilon(x)}\ell(f_\theta(x+r), y)\\
   &= \frac{1}{N}\sum_{i=1}^N \max_{r_i\in S_\varepsilon(x_i)}\ell(f_\theta(x_i+r_i), y_i)  
\end{split}
\end{equation}
The inner sub-problem of this min-max optimization problem is usually solved by an effective attack algorithm. If we use $\tilde{x}_j$ denote the adversarial example of $x$ obtained at $j$-th step, then any $k$ steps $\ell_p$ ($1\le p\le\infty$) iterative attack algorithm with allowed perturbation strength $\varepsilon$ can be formulated as follows:
\begin{equation}\label{eq:attack-algorithm}
    \begin{split}
        &\tilde{x}_0=x, \quad
        \tilde{x}_{t}=\tilde{x}_{t-1}+r_t\quad \tilde{x}=\tilde{x}_k\\
        &\mathrm{s.t.} \norm{r_i}_p\le\delta\quad\norm{\sum r_i}_p\le\varepsilon\quad\forall 1\le t\le k
    \end{split}
\end{equation}
In practice, PGD attack as proposed in \citet{madry} is a common choice. Also, for bounded domains, clip operation need to be considered so that each $\tilde{x}_t$ still belongs to the domain. However, such restriction is impossible to be analyzed in general. So we remove the restriction by assuming the sample space is unbounded. In this case, adversarial training algorithm updates the parameters by stochastic gradient descent on adversarial example batches. To be precise, we have the following discrete parameter updates:
\begin{equation}
    \theta_{t+1}=\theta_t-\eta\frac{\dd \mathcal{L}}{\dd \theta}(\tilde{X}_t)
\end{equation}
For an infinitesimal time $\dd t$ with learning rate $\eta_t=\eta \dd t$, one can obtain the continuous gradient descent by chain rules as follows:
\begin{equation}\label{eq: sgd-adv-cont}
    \begin{split}
    \frac{\dd \theta_t}{\dd t}=\frac{\theta_{t+\dd t}-\theta_t}{\dd t}
    =-\eta\nabla^T_\theta f_t(\tilde{X}_t)\nabla_{f_t}\mathcal{L}(\tilde{X}_t)
    \end{split}
\end{equation}
where we use the short notation $f_t(x)=f_{\theta_t}(x)$ and the following notation for convenience\footnote{We drop the labels $Y$ here since in adversarial training, the labels assigned to adversarial examples are the same as the clean ones.}:
\begin{equation}
    \nabla_f\mathcal{L}(X)=\begin{bmatrix}
    |\\ \nabla_f\ell(f(x_i), y_i)\\|
    \end{bmatrix}
\end{equation}
Accordingly, we can obtain the following theorem relating to dynamics of adversarial training. 
\begin{thm}\label{thm:dyn}
Let $f_t(x)$ be the timely dependent network function describing adversarial training process and $\tilde{X}_t$ the adversarial examples generated at time $t$ by any chosen attack algorithm. Then $f_t$ satisfies the following differential equation:
\begin{equation}\label{eq:adv-dyn}  
\begin{split}
    \frac{\dd f_t}{\dd t}(X)&=\nabla_\theta f_t(X)\frac{\dd\theta_t}{\dd t}\\
    &=-\frac{\eta}{N}\nabla_\theta f_t(X)\nabla_\theta^T f_t(\tilde{X}_t)\nabla_{f_t}\mathcal{L}(\tilde{X}_t)
\end{split}
\end{equation}
\end{thm}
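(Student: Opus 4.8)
The plan is to obtain the stated ODE by a direct application of the chain rule to the continuous-time parameter flow, paying attention only to which $\theta$-dependencies are active when the loss is differentiated. First I would regard $f_t(X)=f_{\theta_t}(X)$ as the composition of the fixed map $\theta\mapsto f_\theta(X)$ with the trajectory $t\mapsto\theta_t$ from \eqref{eq: sgd-adv-cont}; differentiability of $f_\theta$ in $\theta$ together with differentiability of $t\mapsto\theta_t$ then gives $\frac{\dd f_t}{\dd t}(X)=\nabla_\theta f_t(X)\,\frac{\dd\theta_t}{\dd t}$, which is precisely the first line of \eqref{eq:adv-dyn} and needs nothing beyond the continuous-time setup already in place.

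Next I would substitute the parameter dynamics. Passing to the infinitesimal-step limit of the discrete update $\theta_{t+1}=\theta_t-\eta\,\frac{\dd\mathcal{L}}{\dd\theta}(\tilde X_t)$ as in \eqref{eq: sgd-adv-cont}, the remaining task is to expand $\frac{\dd\mathcal{L}}{\dd\theta}(\tilde X_t)$. The crucial observation is that once the adversarial batch $\tilde X_t$ has been produced by the attack of \eqref{eq:attack-algorithm} it is treated as a fixed input, so the only $\theta$-dependence the gradient sees is the explicit one through $f_\theta(\tilde x_{i,t})$. With $\mathcal{L}=\frac1N\sum_{i}\ell(f_\theta(\tilde x_{i,t}),y_i)$, the chain rule gives $\frac{\dd\mathcal{L}}{\dd\theta}(\tilde X_t)=\frac1N\sum_i\nabla_\theta^T f_t(\tilde x_{i,t})\,\nabla_f\ell(f_t(\tilde x_{i,t}),y_i)=\frac1N\nabla_\theta^{T}f_t(\tilde X_t)\,\nabla_{f_t}\mathcal{L}(\tilde X_t)$, where $\nabla_{f_t}\mathcal{L}(\tilde X_t)$ is the stacked vector of pointwise loss gradients defined above. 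Combining this with $\frac{\dd\theta_t}{\dd t}=-\eta\,\frac{\dd\mathcal{L}}{\dd\theta}(\tilde X_t)$ and the chain-rule identity of the previous step produces the second line of \eqref{eq:adv-dyn}, the factor $1/N$ emerging from the empirical average in $\mathcal{L}$.

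The one point deserving comment rather than mechanical calculation is the legitimacy of freezing $\tilde X_t$ while differentiating the loss. For the idealized objective this is Danskin's theorem: since $\mathcal{L}$ is an expectation of pointwise maxima $\max_{r\in S_\varepsilon(x)}\ell(f_\theta(x+r),y)$, its $\theta$-gradient equals the explicit partial gradient evaluated at an inner maximizer with that maximizer held fixed, which is exactly what the ``generate adversarial examples, then step on them'' scheme computes. For a generic $k$-step attack as in \eqref{eq:attack-algorithm}, which need not attain the true maximizer, the identity should instead be read as the exact dynamics of the implemented algorithm, in which $\tilde X_t$ is by construction detached from the parameter; either reading yields the same substitution, so the theorem follows. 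I do not expect any further obstacle here, as no infinite-width limit or convergence estimate is invoked at this stage — those enter only in the subsequent existence and error-bound results.
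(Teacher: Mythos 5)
Your proposal is correct and follows essentially the same route as the paper's proof: write the discrete update $\theta_{t+\dd t}=\theta_t-\frac{\eta_t}{N}\nabla_\theta^T f_t(\tilde X_t)\nabla_f\mathcal{L}(\tilde X_t)$ with the adversarial batch held fixed, pass to the continuous limit $\eta_t=\eta\,\dd t$, and apply the chain rule to $f_{\theta_t}(X)$. Your added remark on Danskin's theorem justifying the freezing of $\tilde X_t$ is a reasonable elaboration the paper leaves implicit, but it does not change the argument.
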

Equation \eqref{eq:adv-dyn} is referred as the dynamics of adversarial training because it describes how the adversarially trained network function $f_t$ evolves along time $t$. On the other hand, training a  adversarial robust network $f_\theta$ is the same as solving Equation \eqref{eq:adv-dyn} for given certain initial conditions.

Let $\Theta_t(X,Y)=\nabla_\theta f_t(X)\nabla_\theta^Tf_t(Y)$, then $\Theta(X,X)$ is the well-known empirical \textit{Neural Tangent Kernel} (NTK) 
, which describes the dynamics of natural training as studied in \cite{EvolveLinear}. To be precise, if $f^{nat}$ is the model function of natural training, $\theta^{nat}$ the corresponding parameters and $\mathcal{L}_{nat}$ the corresponding loss, then the dynamics of natural training are given by
\begin{align}
        &\frac{\dd \theta^{nat}_t}{\dd t}=-\eta\nabla_\theta^T f^{nat}_t(X)\nabla_{f^{nat}_t}\mathcal{L}_{nat}(X)\label{eq:sgd-nat}\\
        &\frac{\dd f^{nat}_t}{\dd t}(X)=-\eta\Theta_t(X,X)\nabla_{f^{nat}_t}\mathcal{L}_{nat}(X)\label{eq:dyn-nat}
\end{align}
Detailed calculations can be found in \cite{EvolveLinear}. If we compare Equation \eqref{eq: sgd-adv-cont} with Equation \eqref{eq:sgd-nat}, we see that the gradient descent of adversarial training can be viewed as natural training with clean images $X$ replaced by adversarial images $\tilde{X}_t$ at each step. This matches our intuition because in practice, the parameter update is based on the adversarial examples as we discussed above, so adversarial training is closely related to natural training on adversarial images. 

However, if we compare Equation \eqref{eq:adv-dyn} and Equation \eqref{eq:dyn-nat}, we can see from the evolution of the model outputs that the usual NTK is now replaced by $\Theta_t(X,\tilde{X}_t)=\nabla_\theta f_t(X)\nabla_\theta^T f_t(\tilde{X}_t)$, which we call \textit{Mixed Tangent Kernel} (MTK). Unlike NTK, MTK is not symmetric in general. It involves both clean images $X$ and adversarial images $\tilde{X}_t$. This indicates adversarial training is not simply a naturally model training on adversarial images, but some more complicated training method continuously couples clean images and adversarial images during training procedure. This coupling of clean images and adversarial images gives an intuition why adversarial training can achieve both good model accuracy and adversarial robustness. 

\subsection{Finding Adversarial Winning Ticket}\label{sec: adv-win-ticket}
To verify the LTH in adversarial setting, we need to find out a trainable sparse sub-network which has similar training dynamics as the dense network. We are then aiming to find a mask $m$ with given sparsity density $p$ such that the sparse classifier $f^s(x)=f_{m\odot\theta}(x)$ can be trained to be adversarial robust from scratch\footnote{Without loss of generality, we use superscript $s$ to mean items correspond to sparse networks, while items without superscript correspond to dense networks.}. For simplicity, we assume, as in \citet{EvolveLinear} and \citet{NTKLTH}, the cost function to be squared loss\footnote{Norms without subscript will denote $\ell_2$ norm.} $\ell(f_\theta(x),y)=\displaystyle\frac{1}{2}\norm{f_\theta(x)-y}^2$. A discussion of other loss functions, such as cross-entropy, is given in Appendix \ref{sec:extensions}. Let $\tilde{X}$ be the collection of adversarial examples as above, then 
\begin{equation}
    \nabla_{f_t}\mathcal{L}(\tilde{X}_t)=f_t(\tilde{X}_t)-Y
\end{equation}
And therefore, the dynamics of model outputs of dense network in Equation \eqref{eq:adv-dyn} becomes
\begin{equation}\label{eq:adv-dyn-l2}  
     \frac{\dd f_t}{\dd t}(X)=-\frac{\eta}{N}\Theta_t(X,\tilde{X}_t)(f_t(\tilde{X}_t)-Y)
\end{equation} 

\begin{algorithm}[t!]\caption{Finding Adversarial Winning Ticket}\label{alg:awt}
	\begin{algorithmic}[1]
		\STATE {\bfseries Input:} clean images $X$, labels $Y$, model structure $f$, dense initialization $\theta_0$, learning rate $\eta$, adversarial perturbation strength $\varepsilon$, sparsity level $k$, mask update frequency $T_m$.
		\STATE {\bfseries Initialize:} initial weight $w_0=\theta_0$, initial binary mask $m$ based on $w_0$, adversarial images $R_0$, $t=1$.
		\FOR{$t=1$ to $N$}
		\STATE Sample a mini batch $S$ and calculate the gradient $\nabla_{w}\mathcal{L}_{awt}$ on $S$. 
		\STATE $w\leftarrow w-\eta\nabla_{w}\mathcal{L}_{awt}-\beta m\odot w$
		\IF{$t\ \%\ T_m = 0$}
		\STATE update $m$ according to magnitudes of current $w$
		\ENDIF
		\ENDFOR
		\STATE {\bfseries Return:}  $m$. 
	\end{algorithmic}\label{alg:adv-winning-ticket}
\end{algorithm}

To achieve our goal, note that the dense classifier $f_t(x)$ in Theorem \ref{thm:dyn} converges eventually to the solution of adversarial training, so it is supposed to be adversarial robust if fully-trained. On the other hand, the dynamics of the sparse adversarial training $f^s_t(x)$ can be described similarly as:
\begin{equation}\label{eq:adv-dyn-l2-sp}
    \frac{\dd f^s_t}{\dd t}(X)=-\frac{\eta}{N}\Theta^s_t(X,\tilde{X}^s_t)(f^s_t(\tilde{X}^s_t)-Y)
\end{equation}
where $\tilde{X}^s_t$ is the collection of adversarial images corresponding to sparse network and $\Theta^s_t(X,\tilde{X}^s_t)$ is the MTK of sparse classifier. Therefore, to get the desired mask $m$, it is sufficient to make $f^s_t(X)\approx f_t(X)$ for all $t$. According to Equation \eqref{eq:adv-dyn-l2} and Equation \eqref{eq:adv-dyn-l2-sp}, this can be achieved by making the MTK distance and adversarial target distance between dense and sparse networks close enough at any time $t$ in the training. That is to say,
\begin{equation}
    \Theta_t(X,\tilde{X}_t)\approx \Theta^s_t(X,\tilde{X}^s_t)\quad\quad f_t(\tilde{X}_t)\approx f^s_t(\tilde{X}^s_t)
\end{equation}
for all $t$. Under mild assumptions, we may expect all these items are determined at $t=0$ because of the continuous dependence of the solution of differential equations on the initial values. This then leads to the consideration of the following optimization problem:
\begin{equation}\label{eq:awt-op}
\begin{split}
    \min_{m}\mathcal{L}_{awt}=&
        \frac{1}{N}\norm{f_{\theta_0}(\tilde{X}_0)-f^s_{m\odot\theta_0}(\tilde{X}^s_0)}^2\\
        &+\frac{\gamma^2}{N^2}\norm{\Theta_0(X, \tilde{X}_0)-\Theta^s_0(X, \tilde{X}^s_0)}_F^2
\end{split}
\end{equation}
where $\norm{\cdot}$ is the $\ell_2$ norm of vectors and $\norm{\cdot}_F$ is the Frobenius norm of matrices. In equation \eqref{eq:awt-op}, the first and second items in the right hand side are referred as {\it target distance} and {\it kernel distance}, respectively.  We call the resulting mask \textit{Adversarial Winning Ticket} (AWT). Our method is summarized in algorithm \ref{alg:awt}. Since the binary mask $m$ cannot be optimized directly, instead we train a student network $f_{m\odot w}(x)$. The mask $m$ is then updated according to the magnitudes of current weights $w$ after several steps, which is specified by the mask update frequency. To get sparse adversarial robust network, the obtained AWT $f_{m\odot\theta_0}(x)$ will be adversarially trained from scratch.

\begin{figure}[t!]
    \centering
    \includegraphics[scale=1, trim = 9.2cm 7.06cm 6cm 8.8cm, clip]{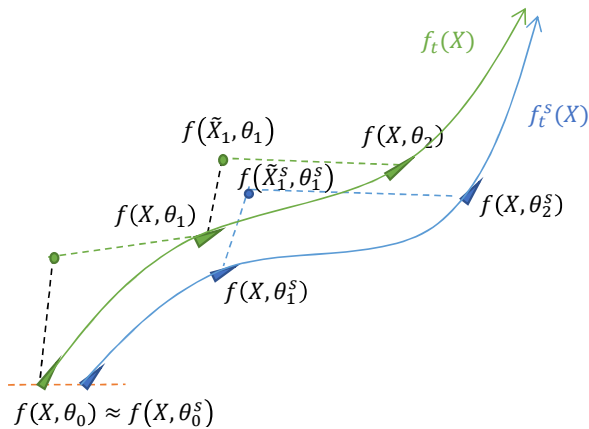}
    \caption{Schematic illustration of networks' outputs evolution under adversarial training. Solid lines represent continuous training dynamics of dense (green) and sparse (blue) networks. Triangular marks locate model outputs at each step under gradient descent. Vertical dash lines represent adversarial attacks and correspondingly, horizontal dash lines represent parameter updates with respect to given adversarial examples. The SGD process from $t=1$ to $t=2$ is marked.
    }
    \label{fig:awt}
\end{figure}

This intuition can be further illustrated by Figure \ref{fig:awt}. For each iteration of gradient descent ($t=1$ to $t=2$ in the figure), adversarial training contains two steps: adversarial attack (vertical dash line) and parameter update (horizontal dash line). Our goal is to make the blue curve (sparse) close to the green one (dense). This can be done by making the attack and parameter update curves of sparse and dense networks close enough for each time $t$. However, as we can see from the figure, $f_t$ and $f^s_t$ are determined by the initial condition, hence we get the above optimization problem.  

Formally we have the following theorem to estimate the error bound between dense and sparse outputs.

\begin{thm}\label{thm:main}
Let $f_\theta(x)$ denote the dense network function. Suppose $f_\theta$ has identical number of neurons for each layer, i.e. $n_1=n_2=\cdots=n_L=n$ and assume $n$ is large enough. Denote $f^s_{m\odot\theta}(x)$ the corresponding sparse network with $1-p$ weights being pruned. Assume $f$ and $f^s$ have bounded first and second order derivatives with respect to $x$, i.e.
\begin{align*}
    &\max_{t, x}\big\{\norm{\partial_{x} f_t}_q, \norm{\partial_{x} f^s_t}_q\big\}\le C_{1,q}\\
    &\max_{t, x}\big\{\norm{\partial_{xx}^2f_t}_{p,q}, \norm{\partial_{xx}^2f^s_t}_{p,q}\big\}\le C_{2,q}
\end{align*}
where we choose an $\ell_p$ attack to generate adversarial examples such that $q$ is the conjugate of $p$ in the sense of $1/p+1/q=1$.\footnote{If $p=\infty$, we take $q=1$.} Denote the optimal loss value for AWT optimization problem \eqref{eq:awt-op} to be $\mathcal{L}_{awt}^*=\alpha^2$. Then for all $t\le T$ with $T$ the stop time, with learning rate $\eta=O(T^{-1})$, we have
\begin{equation}\label{eq:main}
    \underset{{x\in\mathcal{D}}}{\mathbb{E}}\norm{f_t(x)-f^s_t(x)}^2\le 4(\alpha+4C_q\varepsilon)^2
\end{equation}
where $C_q=C_{1,q}+\varepsilon C_{2,q}$ is a constant. 
\end{thm}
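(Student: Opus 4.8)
The goal is to turn the two output ODEs \eqref{eq:adv-dyn-l2} and \eqref{eq:adv-dyn-l2-sp} into a single differential inequality for the discrepancy $g_t:=f_t(X)-f^s_t(X)\in\mathbb{R}^{k|\mathcal D|}$ and then close it with a Gronwall-type estimate. The first step is to place ourselves in the over-parameterized (lazy-training) regime: because $n_1=\dots=n_L=n$ is large and the learning rate is taken to be $\eta=O(T^{-1})$, the NTK-stability estimates of \citet{EvolveLinear} hold on $[0,T]$ — the parameters $\theta_t$ and $\theta^s_t$ stay within an $o(1)$ ball of their initializations, the two networks evolve essentially as their linearizations, and the tangent kernels barely move, so $\Theta_t(\cdot,\cdot)\approx\Theta_0(\cdot,\cdot)$ and $\Theta^s_t(\cdot,\cdot)\approx\Theta^s_0(\cdot,\cdot)$ uniformly in $t\le T$. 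The only feature absent in natural training is that the second slot of the mixed tangent kernel is the adversarial batch $\tilde X_t$, regenerated at every time; but \eqref{eq:attack-algorithm} guarantees $\|\tilde X_t-X\|_p\le\varepsilon$, so all relevant inputs (clean or adversarial, dense or sparse) stay in one fixed compact $\varepsilon$-neighborhood of the data and the lazy-training constants there are controlled by $C_{1,q},C_{2,q}$.

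\textbf{Reducing adversarial quantities to clean ones.} Next I would expand every adversarial quantity around the corresponding clean one. A first/second-order Taylor expansion in $x$, using $\|\partial_x f_t\|_q\le C_{1,q}$ and $\|\partial^2_{xx}f_t\|_{p,q}\le C_{2,q}$ and Hölder with $1/p+1/q=1$, gives $\|f_t(\tilde x)-f_t(x)\|\le C_{1,q}\varepsilon+\tfrac12 C_{2,q}\varepsilon^2\le C_q\varepsilon$ (this is precisely where $C_q=C_{1,q}+\varepsilon C_{2,q}$ comes from), and the same for $f^s_t$, hence $\|f_t(\tilde X_t)-f_t(X)\|\le\sqrt{N}\,C_q\varepsilon$ for the stacked vectors; an analogous expansion for $\nabla_\theta f_t$ controls $\|\Theta_t(X,\tilde X_t)-\Theta_t(X,X)\|_F$ and its sparse analogue by $O(C_q\varepsilon)$ times the bounded tangent-feature norms. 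Combining these with the kernel-stability of the previous step and with the meaning of the optimum $\mathcal L^*_{awt}=\alpha^2$ — which forces both $\tfrac{1}{\sqrt N}\|f_0(\tilde X_0)-f^s_0(\tilde X^s_0)\|\le\alpha$ and $\tfrac{\gamma}{N}\|\Theta_0(X,\tilde X_0)-\Theta^s_0(X,\tilde X^s_0)\|_F\le\alpha$ — I would get, uniformly for $t\le T$,
\[
 \tfrac1N\big\|\Theta_t(X,\tilde X_t)-\Theta^s_t(X,\tilde X^s_t)\big\|_F \;\le\; c_1(\alpha+C_q\varepsilon),
 \qquad
 \big\|f_t(\tilde X_t)-f^s_t(\tilde X^s_t)\big\| \;\le\; \|g_t\|+2\sqrt N\,C_q\varepsilon,
\]
together with $\|g_0\|\le\sqrt N(\alpha+2C_q\varepsilon)$, for an absolute constant $c_1$ (absorbing $\gamma$).

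\textbf{Closing the loop.} Subtracting \eqref{eq:adv-dyn-l2-sp} from \eqref{eq:adv-dyn-l2} and inserting $\pm\,\Theta_t(X,\tilde X_t)(f^s_t(\tilde X^s_t)-Y)$,
\[
 \frac{\dd g_t}{\dd t}=-\frac{\eta}{N}\Theta_t(X,\tilde X_t)\big(f_t(\tilde X_t)-f^s_t(\tilde X^s_t)\big)-\frac{\eta}{N}\big(\Theta_t(X,\tilde X_t)-\Theta^s_t(X,\tilde X^s_t)\big)\big(f^s_t(\tilde X^s_t)-Y\big).
\]
Pairing with $g_t$, using the two bounds above, the uniform operator-norm bound on $\Theta_t$ from the lazy regime, and the fact that the residual $\|f^s_t(\tilde X^s_t)-Y\|$ stays $O(\sqrt N)$ along the essentially linear flow, one obtains a differential inequality of the form $\frac{\dd}{\dd t}\|g_t\|^2\le c\,\eta\big(\|g_t\|^2+\sqrt N(\alpha+C_q\varepsilon)\|g_t\|\big)$ with $c$ depending only on those bounded quantities. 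Since $\eta T=O(1)$, the accumulated Gronwall factor $e^{cT\eta}$ is $O(1)$, so integrating from $0$ to $t\le T$ and using the estimate on $\|g_0\|$ gives $\|g_t\|\le 2\sqrt N(\alpha+4C_q\varepsilon)$; dividing by $N$ and noting $\mathbb{E}_{x\in\mathcal D}\|f_t(x)-f^s_t(x)\|^2=\tfrac1N\|g_t\|^2$ yields \eqref{eq:main}.

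\textbf{Where the difficulty lies.} The crux is the first step combined with the $\varepsilon$-perturbation bookkeeping: one must show that the \emph{mixed} tangent kernel — whose second argument is an adversarial batch produced afresh at each instant from the currently-changing network — stays within $O(\alpha+C_q\varepsilon)$ of its optimized initial value $\Theta^s_0(X,\tilde X^s_0)$ over the \emph{whole} horizon. This needs (i) the standard over-parameterized estimate that $\nabla_\theta f_t$ and $\nabla_\theta f^s_t$ drift by $o(1)$, which is exactly where ``$n$ large'' and $\eta=O(T^{-1})$ are consumed, and (ii) converting the motion of the adversarial examples into Taylor remainders controlled by $C_{1,q},C_{2,q},\varepsilon$, using that every perturbed input lives in the same fixed $\varepsilon$-ball around its clean counterpart. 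Keeping the constant in the Gronwall step independent of $T$ is precisely what forces the scaling $\eta=O(T^{-1})$ in the hypothesis.
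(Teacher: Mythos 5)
Your proposal is correct and follows essentially the same route as the paper: the same Taylor/H\"older reduction of adversarial inputs to clean ones (the paper's Lemma \ref{lem:bound-value}, which is where $C_q=C_{1,q}+\varepsilon C_{2,q}$ arises), the same use of wide-network kernel stability from \citet{EvolveLinear} together with the bounds $\norm{f_0(\tilde{X}_0)-f^s_0(\tilde{X}^s_0)}\le\sqrt{N}\alpha$ and $\norm{\Theta_0-\Theta^s_0}_F\le N\alpha/\gamma$ extracted from $\mathcal{L}_{awt}^*=\alpha^2$, and the same error-propagation argument whose constant is kept $T$-independent by $\eta=O(T^{-1})$. The only difference is cosmetic: you close the propagation with a continuous Gronwall inequality, whereas the paper runs a discrete induction proving the stronger bound $\norm{f_t(X)-f^s_t(X)}\le\big(1+\tfrac{t}{T}\big)\sqrt{N}(\alpha+4C_q\varepsilon)$ with an explicitly tuned learning rate; both yield \eqref{eq:main} after dividing by $N$.
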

Note that we put no restriction on any specific attack algorithm, hence we can choose any strong attack algorithm for generating adversarial examples. In practice, PGD attack is commonly chosen for adversarial training. Also, our theoretical results consider any $\ell_p$ attack with $1\le p\le\infty$. The uniform bound assumption of derivatives are reasonable. If we take the Taylor expansion of $f_t$ with respect to $f_0$, then the derivatives are functions of $\theta_t$. Since we apply weight decay in our training, $\theta_t$ is uniformly bounded for all $t\le T$, also we only have finite training data, so the derivatives can be assumed to be uniformly bounded. Moreover, $C_q$ can be adjusted by carefully choosing the regularizing constant of weight decay. Proof details and a discussion of the constants are presented in Appendix \ref{sec:supp-theory}.

Equation \eqref{eq:main} shows that the expected error between sparse and dense outputs are bounded by the optimal loss value and adversarial perturbation strength. In practice, the optimial loss value $\alpha^2$ and adversarial perturbation strength $\varepsilon$ are small, we may expect the output of AWT is close to dense output, hence is adversarial robust if fully-trained. Therfore Theorem \ref{thm:main} can be viewed as theoretical justification of the existence of LTH in adversarial setting, and we can find AWT by solving the optimization problem \eqref{eq:awt-op}. 

Theorem \ref{thm:main} reduces to natural training if we take $\varepsilon=0$. In this case, the AWT found is winning ticket for natural training. Hence Theorem \ref{thm:main} also verifies the classical LTH as a special case. Meanwhile, our method reduces to \textit{Neural Tangent Transfer} (NTT) in \citet{NTKLTH} and Equation \eqref{eq:main} gives an error bound of NTT. Furthermore, for ideal case when $\alpha=0$, Equation \eqref{eq:main} implies $f_t(x)=f_t^s(x)$ for all $x$, hence the dense and sparse networks have identical outputs for all time $t$, which extends Proposition 1 in \citet{NTKLTH}.

\section{Experiments}\label{sec: exp}
We now empirically verify the performance of our method.
To be precise, we first show the effectiveness of our method  in preserving the dynamics of adversarial training, that is, our method  can find a sparse sub-network, whose training dynamics are close to the dense network. 
Then we evaluate the robustness of the sparse neural networks obtained by our method. At last, we give a preliminary experimental result to show the possibility to extend our method to large-scaled problems.  

\begin{figure*}[t!]
    \centering
    \subfigure[$\ell_2, \varepsilon=2$]{\includegraphics[scale=0.28]{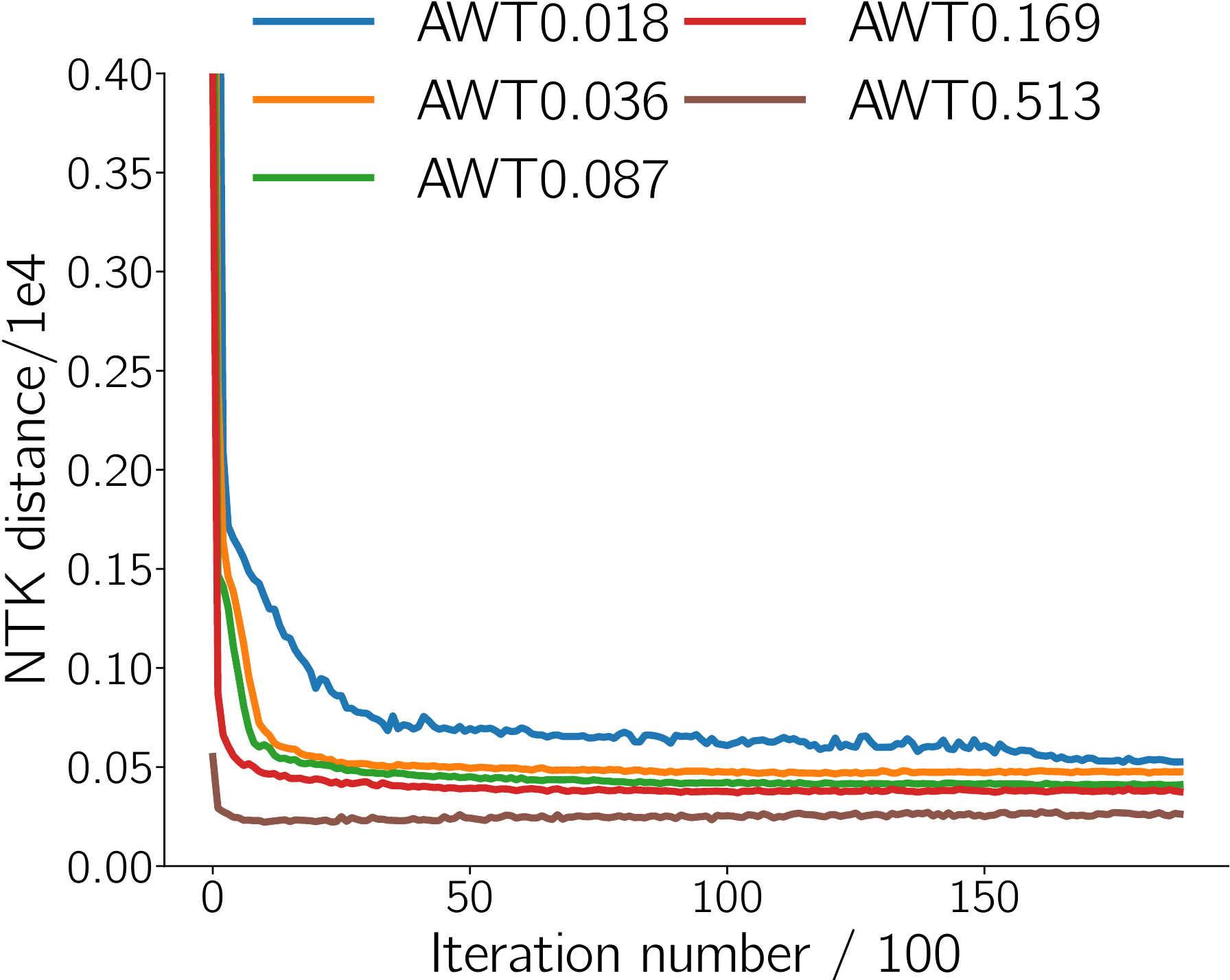}}
    \subfigure[$\ell_2, \varepsilon=2$]{\includegraphics[scale=0.28]{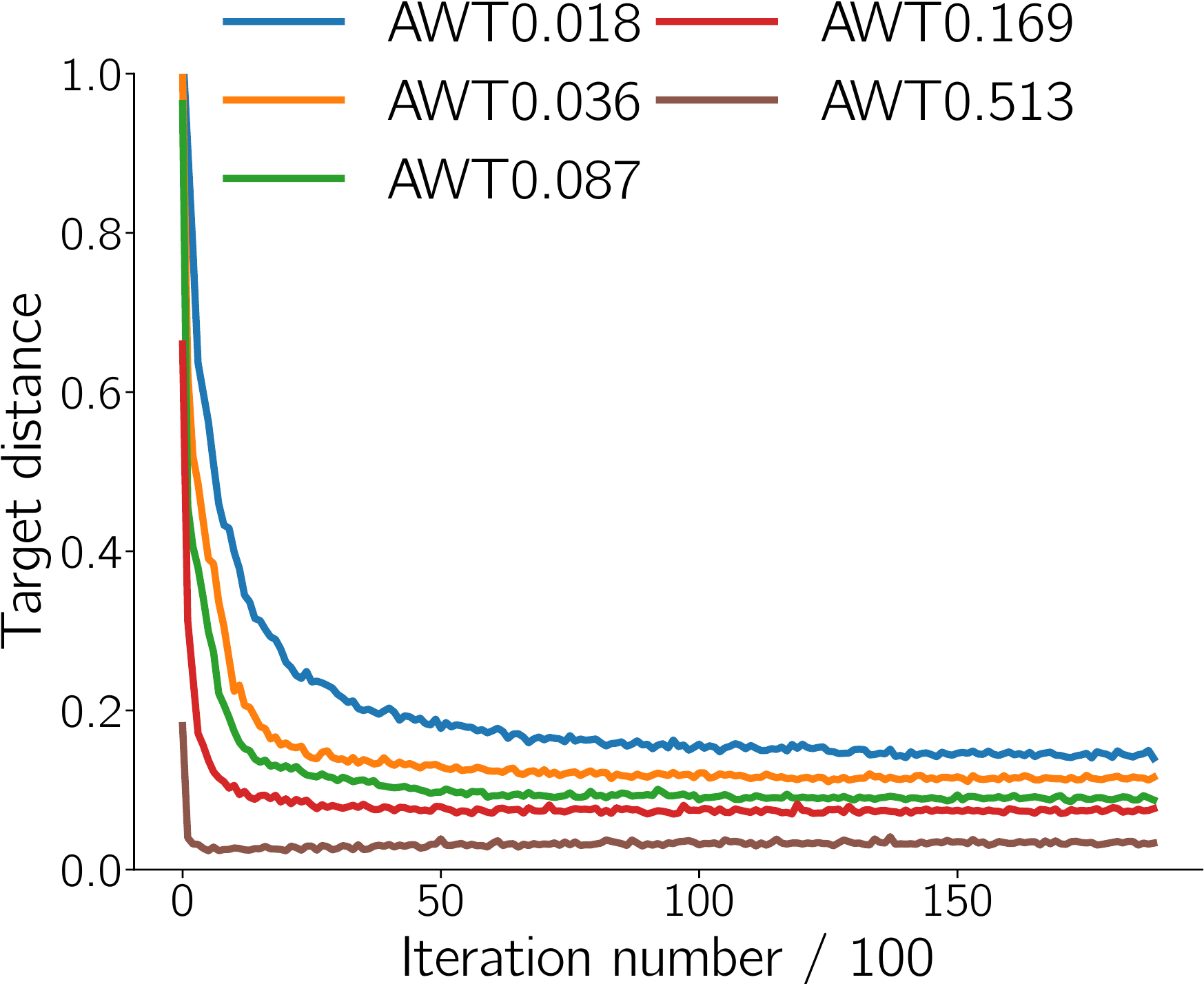}}
     \subfigure[$\ell_2, \varepsilon=2$]{\includegraphics[scale=0.28]{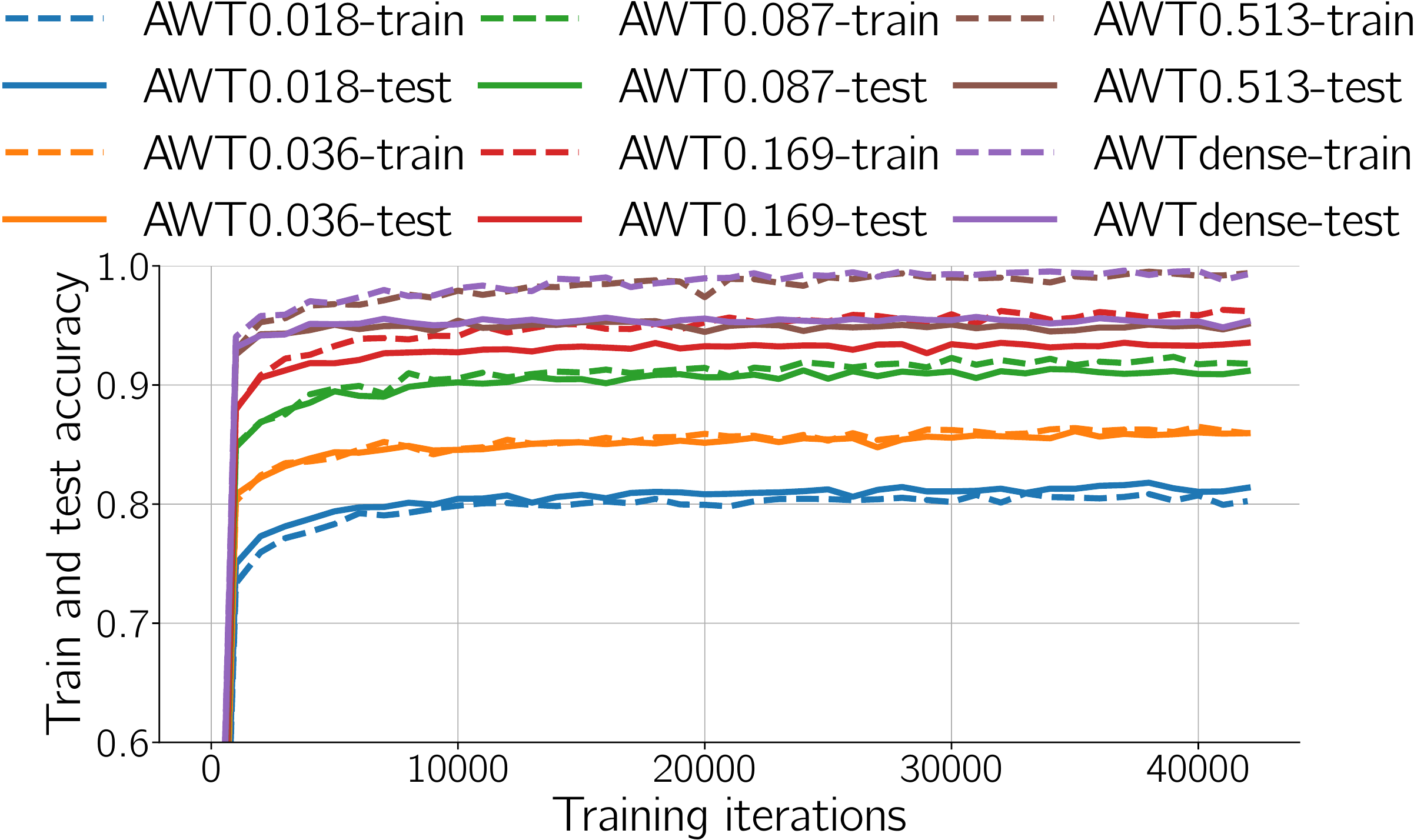}}
    \subfigure[$\ell_\infty, \varepsilon=0.3$]{\includegraphics[scale=0.28]{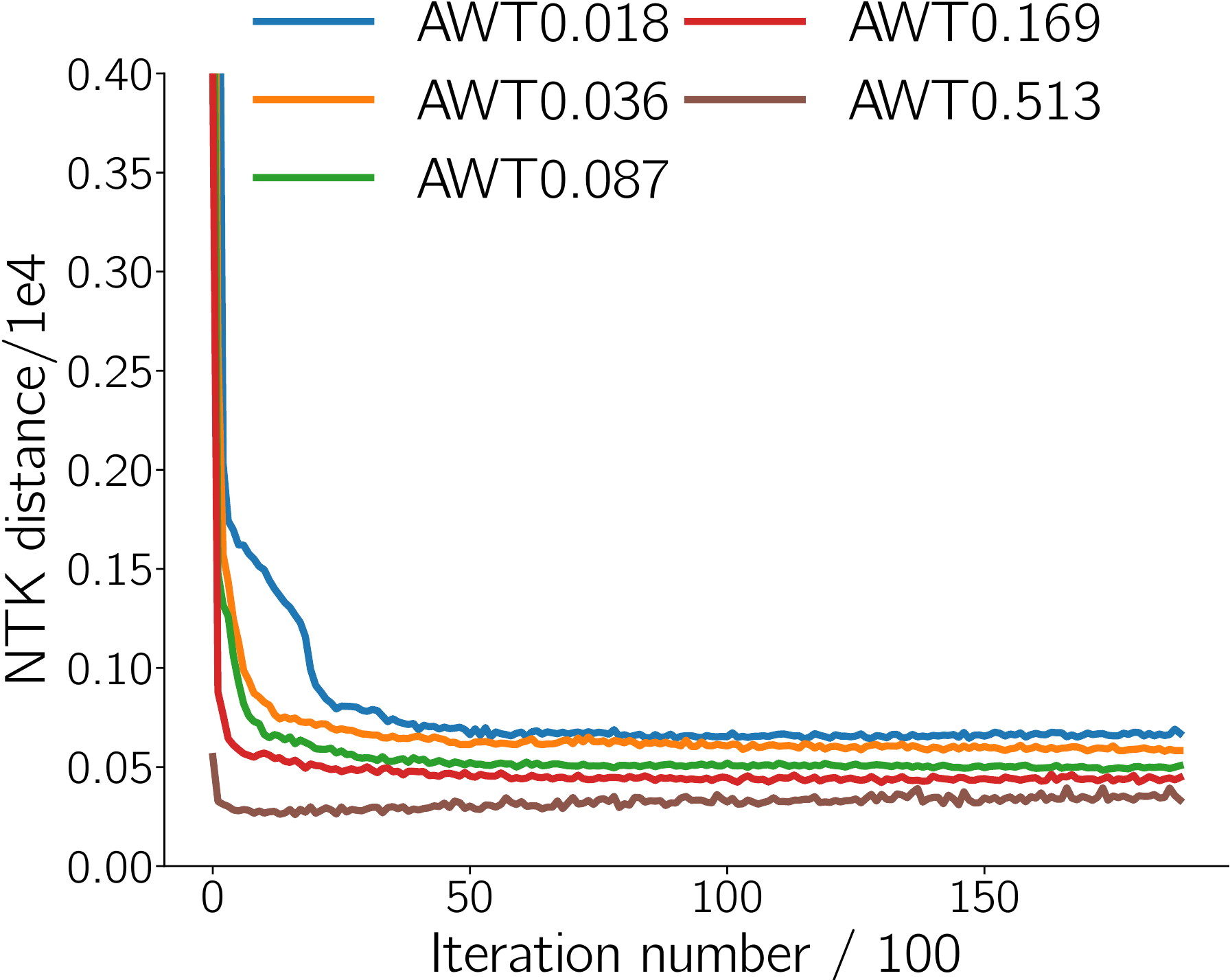}}
    \subfigure[$\ell_\infty, \varepsilon=0.3$]{\includegraphics[scale=0.28]{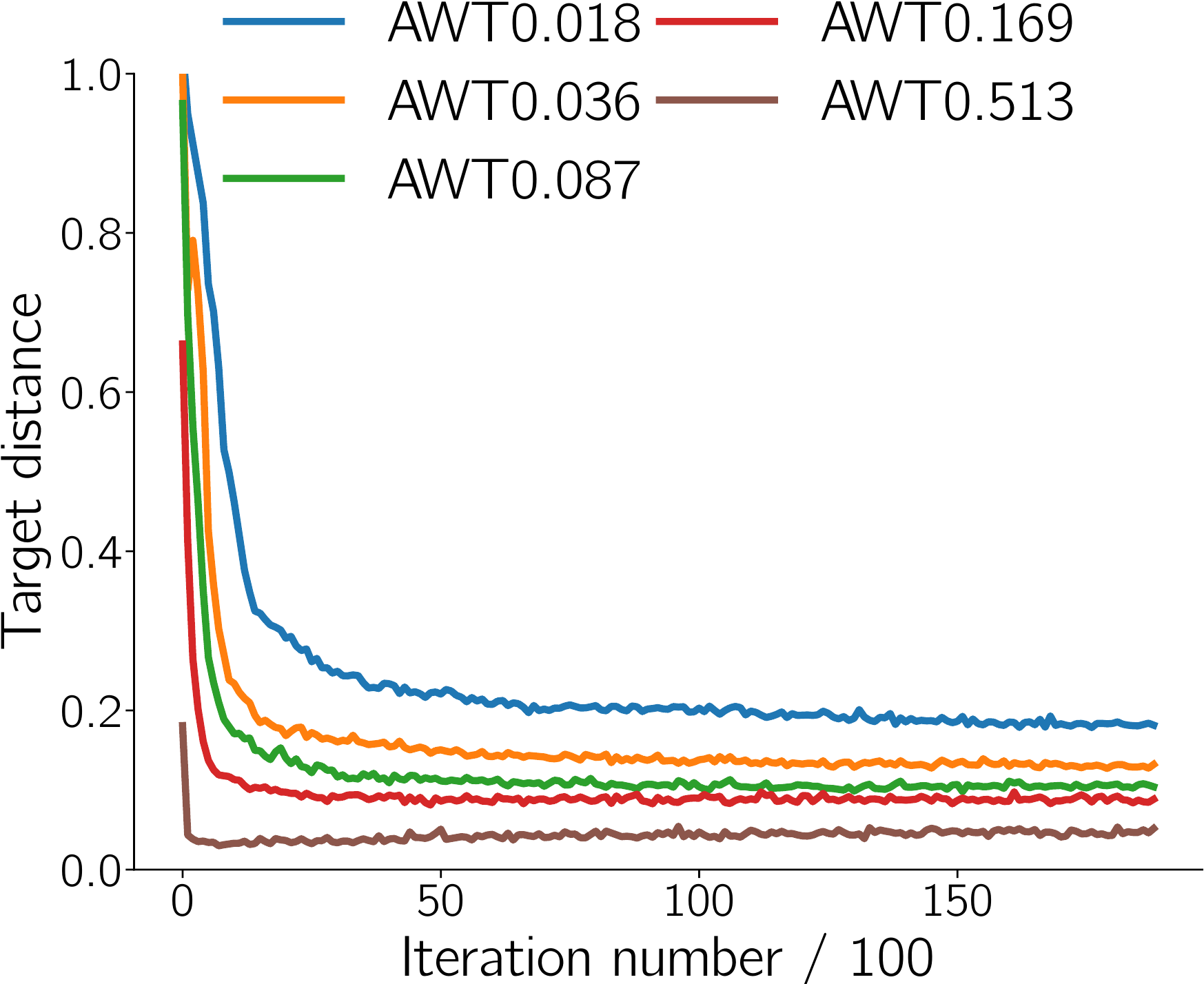}}
     \subfigure[$\ell_\infty, \varepsilon=0.3$]{\includegraphics[scale=0.28]{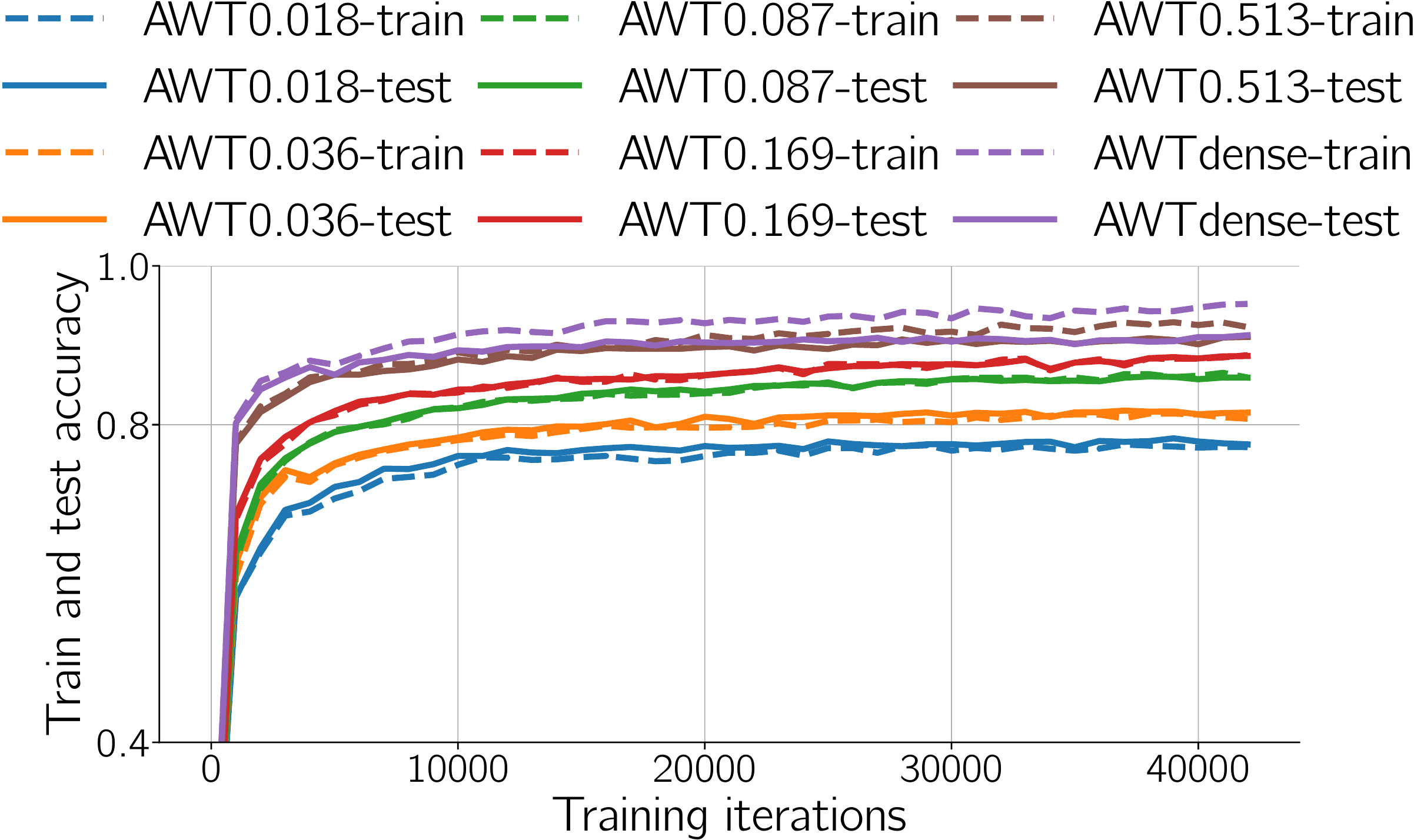}}
    \caption{(a) and (b) are the statistics of AWT during mask searching on MNIST with MLP over different density levels under $\ell_2$ attack. (c) presents the adversarial training and test accuracy curves in the training process under $\ell_2$ attack. (d)-(e) are the results accordingly under $\ell_\infty$ attack.}
    \label{fig:kernel-target-distance-AWT-mlp-mnist}
\end{figure*}

\subsection{Implementation}\label{sec:setting}
We conduct experiments on standard datasets, including MNIST \citep{lecun1998gradient} and CIFAR-10 \citep{krizhevsky2009learning}.
All experiments are performed in JAX \citep{jax2018github}, together with the neural-tangent library \citep{neuraltangents2020}. Due to the high computational and storage costs of NTK, following the experimental setting in \cite{NTKLTH}, we mainly evaluate our proposed method on two networks: MLP and 6-layer CNN.  The preliminary experiment of scalability in Section \ref{sec:scale} is conducted on VGG-16 with CIFAR-10.

We use PGD attacks for adversarial training and robustness evaluation as suggested in \citet{guo2020meets} and \citet{wang2020achieving}. In practice, $\ell_\infty$ attacks are commonly used and we use adversarial strength $\varepsilon=0.3$ for MNIST and $\varepsilon=8/255$ for CIFAR-10. We take 100 iterations for robustness evaluation, the step size is taken to be $2.5\cdot\varepsilon/100$ as suggested by \citet{madry}. Other detailed experimental configurations such as the learning rate and batch size can be found in the supplementary materials. 


\subsection{Effectiveness in Preservation of Training Dynamics}\label{sec:mtk-var}


In this part, we evaluate the ability of our method in preserving the training dynamics. To be precise, at each density level,  we first present the evolution curves of kernel distance and target distance over the whole procedure of finding the adversarial winning ticket. Then we show the adversarial training/testing accuracy during the training process. Since Theorem \ref{thm:main} is valid for any $\ell_p$ attack algorithms, we also present experimental results under $\ell_2$ attacks as well as $\ell_\infty$ attacks.

Figure \ref{fig:kernel-target-distance-AWT-mlp-mnist} (a)/(d) and (b)/(e) show the kernel and target distance curves at different density levels under $\ell_2$/$\ell_\infty$ attack. We can see that as the optimization goes on, both of the kernel and target distances decrease very quickly. As expected, the distance becomes smaller as the density level increases. Figures   \ref{fig:kernel-target-distance-AWT-mlp-mnist} (c)/(f) show the adversarial training/testing accuracy. We can see that when the density becomes larger, the accuracy curve gets closer to the dense one. This indicates that the training dynamics are well preserved.

\begin{table}
\begin{center}
{\footnotesize
\begin{tabular}{ccc}
\toprule
density& \citealt{AdvLTH} & AWT \\ \cmidrule(){1-3}
full model& \multicolumn{2}{c}{98.96/91.14}  \\  \cmidrule(r){1-1} \cmidrule(l){2-3}
 51.3\%  &98.07/60.14 &99.13/91.21  \\  \cmidrule(r){1-1} \cmidrule(l){2-3}
 16.9\%  &97.73/59.91 &96.58/89.30  \\  \cmidrule(r){1-1} \cmidrule(l){2-3}
 8.7\%  &97.20/57.60 &94.48/87.51  \\  \cmidrule(r){1-1} \cmidrule(l){2-3}
 3.6\%  &95.58/48.81 &91.74/83.60  \\  \cmidrule(r){1-1} \cmidrule(l){2-3}
 1.8\%  & 92.67/38.23 &87.69/78.66   \\  
 
 \bottomrule
\end{tabular}
}
\end{center}
\caption{Test accuracy on natural/adversarial examples over different density levels on MNIST with MLP.} \label{tab:accuracy-lenet-mnist}
\end{table} 

To verify the quality of our winning ticket, we compare our method with the latest work by \citet{AdvLTH}, which finds the winning ticket by iteratively pruning and adversarial training. As indicated by the authors \cite{{AdvLTH}}, their method is computationally expensive so they only evaluated it on small MLPs. Therefore we only give the comparison result on MNIST with MLPs here. Specifically, we give the test accuracy on natural/adversarial examples in Table \ref{tab:accuracy-lenet-mnist}.  It shows that our method can outperform the baseline with a large margin. For example, at the density of $1.8\%$, the adversarial test accuracy of our method is $40\%$ higher than that of \citealt{AdvLTH}. We can also see that the accuracy of our method can converge to the dense model much more quickly than the baseline as the density increases. This is benefited from the dynamics preserving property of our sparse sub-network structure.

\subsection{Robustness of Trained Sparse Networks}\label{sec:awt-acc}

In this section, we evaluate the robustness of fully adversarially trained AWT at different density levels. 

\begin{figure}[t!]
    \centering
    \includegraphics[width=0.48\textwidth]{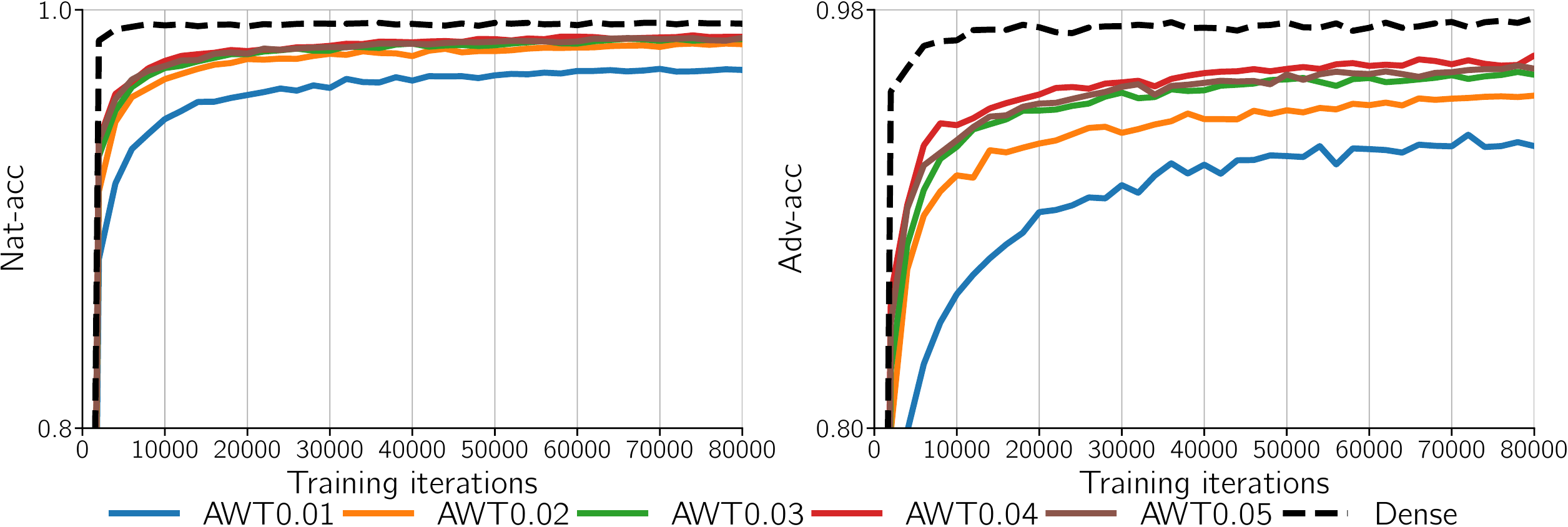}
    \caption{Test accuracy on natural and adversarial examples of CNN trained on MNIST. The density varies in $\{0.01, 0.02, 0.03,0.04, 0.05\}$.}
    \label{fig:acc-mnist-cnn-001-005}

\end{figure}

\begin{figure}[t!]
    \centering
    \includegraphics[width=0.48\textwidth]{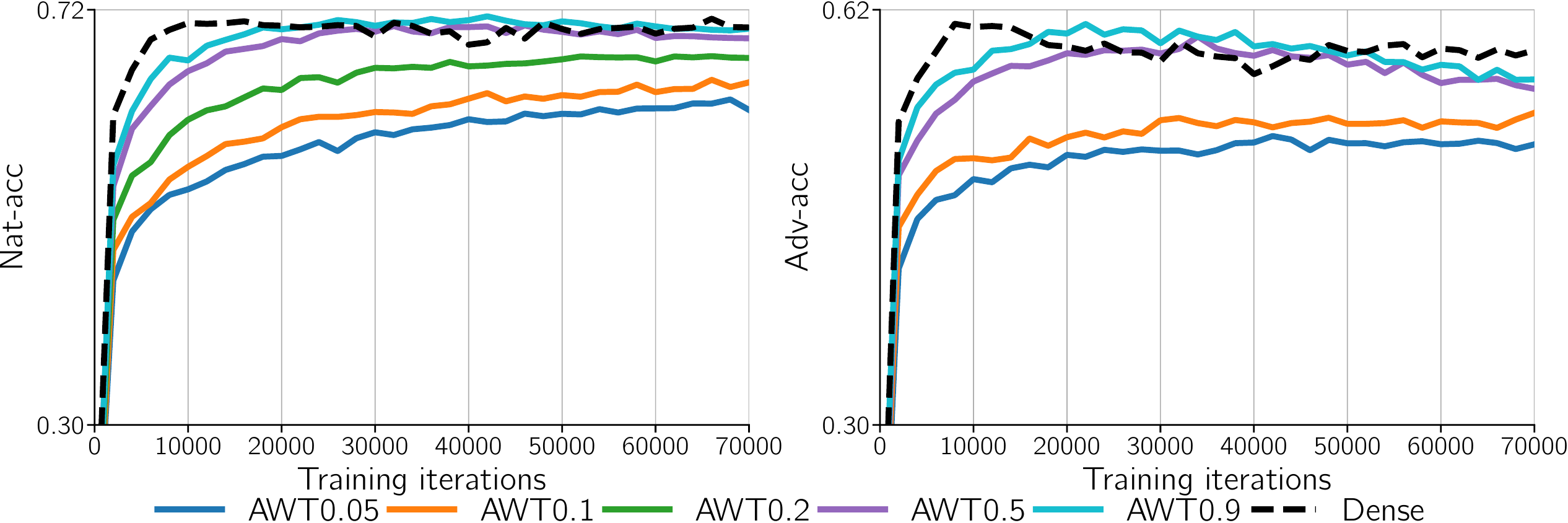}
    \caption{Test accuracy on natural and adversarial examples of CNN trained on CIFAR10.}
    \label{fig:acc-CIFAR10}
\end{figure}

We first present the test accuracy on natural and adversarial examples of the CNN models trained on MNIST and CIFAR10. For CIFAR10, the density varies in $\{0.05, 0.1, 0.2,0.3,\ldots, 0.9\}$. For MNIST, since it  can be classified with much sparser networks compared with CIFAR10, in this section, we only check densities between $\{0.01,\ldots, 0.05\}$ and the results under higher density levels can be found in the appendix. Figure \ref{fig:acc-mnist-cnn-001-005} and \ref{fig:acc-CIFAR10} give the results on MNIST and CIFAR10, respectively. Both of these two Figures show that the models trained by our method have high natural and adversarial test accuracy even when the model is very sparse. For example, Figure \ref{fig:acc-mnist-cnn-001-005} shows that at the density of 0.03, the model trained by our method can reach the test accuracy of $0.98$ and $0.96$ on natural and adversarial examples, which are quite close to the dense model. We can also see that the training dynamic, i.e., the test accuracy curves, can converge to that of the dense model as the density increases. And the sparse models obtained by our method can achieve comparable test accuracy with the dense model after trained with the same number of epochs.

\begin{figure}[htb!]
    \centering
    \includegraphics[width=0.48\textwidth]{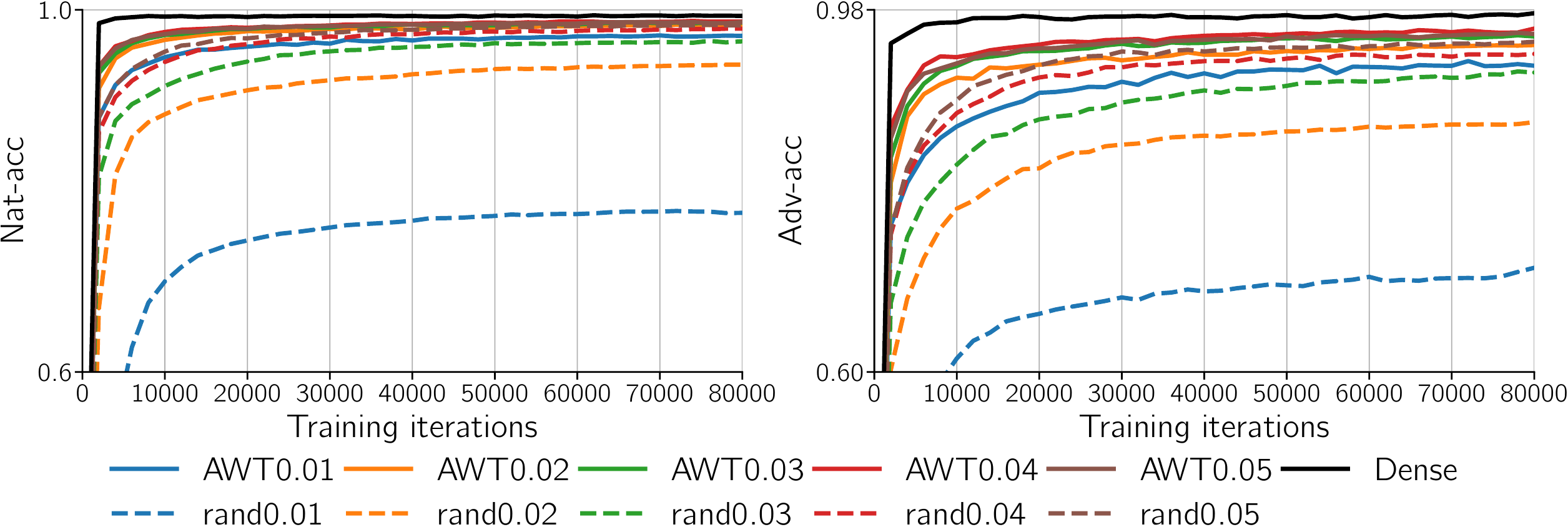}
    \caption{Natural and adversarial test accuracy of the models trained from AWT and random structure on MNIST with CNN.}
    \label{fig:acc-mnist-cnn-001-005-AWT-rand}
\end{figure}

We then compare the performance of models trained from our sparse structure and the random structure at the same density level. We give the results of CNN trained on MNIST with the density varies in $\{0.01, 0.02, \ldots, 0.05\}$ in Figure \ref{fig:acc-mnist-cnn-001-005-AWT-rand}. More results can be found in the appendix. From Figure \ref{fig:acc-mnist-cnn-001-005-AWT-rand}, our method can achieve much higher natural and adversarial test accuracy than the models trained from the random structure. For example, the network trained by our method at the density of $1\%$ achieves higher adversarial test accuracy than the model trained from random structure at the density of $3\%$. It also shows that the learning curves of our method are  much closer to the dense model than the random structure. This demonstrates that our sparse structures indeed have similar training dynamic with the dense model.   

\subsection{Discussion on Scalability}\label{sec:scale}
In the same situation as existing works on NTK, the expensive computational cost of NTK hinders us from conducting large-scale experiments. Fortunately, the following preliminary experiment shows that the methods such as sampling on the NTK matrix could be promising to improve the efficiency of our method. 

To be specific, inspired from Jacobi preconditioner method \citep{concus1976generalized} in optimization theory, we  sample only the diagonal elements in the MTK matrices $\Theta_0(X, \tilde{X}_0)$ and $\Theta_0^s(X, \tilde{X}^s_0)$ in Equation \eqref{eq:awt-op} and keep all other settings the same as above. In this way, the computational complexity of training can be significantly reduced. We conduct a preliminary experiment on CIFAR-10 with large-sized model VGG-16. To the best of our knowledge, we are the first to apply NTK into models as large as VGG-16.  The result is presented in Table \ref{tab:vgg}. It shows that the performance degradation caused by the sampling is unnoticeable. We will investigate this kind of approaches to improve the efficiency of NTK based methods in the future.

\begin{table}[htb!]
\begin{center}
{\footnotesize
\begin{tabular}{cccc}
\toprule
density   & WT & S-AWT & IWI \\ \cmidrule(){1-4}
10\%   & 40.10 & 46.68 & 47.53\\ \cmidrule(){1-4}
 5\%  & 45.15 & 45.87 & 47.59 \\
 \bottomrule
\end{tabular}
}
\end{center}
\caption{Adversarial test accuracy of the VGG16 trained on CIFAR10 at different density levels. WT represents winning ticket, which applies iteratively pruning and adversarial training method. S-AWT is AWT with sampling.  IWI represents Inverse Weight Inheritance. WT and IWI results are copied from \citet{wang2020achieving}.} \label{tab:vgg}
\end{table}

\section{Conclusions}\label{sec: discussion}
We study the evolution of adversarially trained networks and obtain a new type of kernel quantifying the dynamics of adversarial training. We verify the \textit{Lottery Ticket Hypothesis} (LTH) \citep{LTH} in adversarial setting by solving an optimization 
problem to find adversarial winning ticket (AWT), which can be adversarially trained to be robust from scratch. Our work includes the classical LTH in natural training as a special case and extends the bound of \textit{Neural Tangent Transfer} (NTT) \citep{NTKLTH}. Unlike most of the adversarial pruning methods, which follow the classical pruning pipelines, i.e., prune the network during training, our method is a foresight pruning method. To the best of our knowledge, this is the first result showing that to identify winning tickets  at initialization is possible in the adversarial training scenario.

We would like to point out our main contribution  is the verification of LTH in adversarial setting, rather than proposing a practical pruning method. Similar to existing NTK based methods, We did not conduct experiment on large scale datasets, such as ImageNet, due to the well-known fact that the computation of NTK is expensive. Moreover, the preliminary experimental result in Section \ref{sec:scale} indicates certain approximation methods such as sampling can be promising to reduce the computational cost of NTK without noticeable performance degradation. We will further explore this possibility in our future work.



{
	\small
	\bibliographystyle{plainnat}
	\bibliography{awt}
}


\clearpage



\begin{appendix}

\setcounter{equation}{0}
\renewcommand{\theequation}{\thesection.\arabic{equation}}

\onecolumn
\aistatstitle{Supplementary Materials: \\
Finding Dynamics Preserving Adversarial Winning Tickets}

\section{Proof of Theorem}\label{sec:supp-theory}
Let $\mathcal{D}=X\times Y=\{(x_1,y_1),\cdots, (x_N,y_N)\}$ be the empirical data distribution, $f_\theta(x)\in\mathbb{R}^{k}$ the network function,  and $f_\theta(X)=\mathrm{vec}\big([f_\theta(x)]_{x\in X}\big)\in\mathbb{R}^{k|\mathcal{D}|\times 1}$ be the model outputs. Note that adversarial training optimizes the following objective function
\begin{equation}\label{eq:supp-adv-op}
\min_\theta \mathcal{L}=\mathbb{E}_{(x,y)\sim\mathcal{D}}\max_{r\in S_\varepsilon(x)}\ell(f_\theta(x+r), y) 
\end{equation}
We use the following notation for convenience:
\begin{equation*}
    \nabla_f\mathcal{L}(X)=\begin{bmatrix}
    |\\ \nabla_f\ell(f(x_i), y)\\|
    \end{bmatrix}
\end{equation*}
For squared loss $\ell(f(x),y)=\displaystyle\frac{1}{2}\norm{f(x)-y}_2^2$, this is just
\begin{equation*}
    \nabla_f\mathcal{L}(X)= \begin{bmatrix}
   |\\f(x_i)-y_i\\|
    \end{bmatrix}=f(X)-Y
\end{equation*}
\begin{thm}\label{thm:supp-dynamics}
Let $f_t(x):=f_{\theta_t}(x)$ be the timely dependent network function and $\tilde{X}_t$ the adversarial examples generated at time $t$. Then the continuous gradient descent of adversarial training is:
\begin{equation}\label{eq:supp-continuous-sgd}
\begin{split}
    \frac{\dd \theta_t}{\dd t}=-\frac{\eta}{N}\nabla_\theta^T f_t(\tilde{X}_t)\nabla_{f}\mathcal{L}(\tilde{X}_t)
\end{split}
\end{equation}
As a result, $f_t$ satisfies the following differential equation:
\begin{equation}\label{eq:supp-adv-dyn}  
\begin{split}
    \frac{\dd f_t}{\dd t}(X) & =\nabla_\theta f_t(X)\frac{\dd\theta_t}{\dd t}\\
    & =-\frac{\eta}{N}\nabla_\theta f_t(X)\nabla_\theta^T f_t(\tilde{X}_t)\nabla_{f}\mathcal{L}(\tilde{X}_t)
\end{split}
\end{equation}
\end{thm}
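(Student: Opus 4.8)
The plan is to derive the two displayed equations from the discrete adversarial SGD recursion by a chain-rule computation, being careful about the inner maximization. First I would recall that once the attack algorithm \eqref{eq:attack-algorithm} has produced the adversarial batch, the per-sample objective is $\ell(f_\theta(\tilde{x}_i), y_i)$ evaluated at the adversarial point $\tilde{x}_i = \tilde{x}_i(\theta)$, so that $\mathcal{L}(\tilde{X}) = \frac{1}{N}\sum_i \ell(f_\theta(\tilde{x}_i), y_i)$. The only subtlety is that $\tilde{x}_i$ itself depends on $\theta$. I would invoke Danskin's (envelope) theorem: when $r_i$ attains the inner maximum of $r\mapsto \ell(f_\theta(x_i+r),y_i)$ over $S_\varepsilon(x_i)$, the total derivative of the max with respect to $\theta$ equals the partial derivative evaluated at the maximizer, so the contribution of $\partial\tilde{x}_i/\partial\theta$ drops out; when the attack only approximately solves the inner problem this is precisely the standard "stop-gradient through the attack" convention of adversarial training. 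Either way one obtains $\frac{\dd\mathcal{L}}{\dd\theta}(\tilde{X}) = \frac{1}{N}\sum_i \nabla_\theta f_\theta(\tilde{x}_i)^T \nabla_f\ell(f_\theta(\tilde{x}_i),y_i) = \frac{1}{N}\nabla_\theta^T f_\theta(\tilde{X})\,\nabla_f\mathcal{L}(\tilde{X})$.

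Next I would pass to continuous time. Writing the step as $\theta_{t+\dd t} = \theta_t - \eta_t\,\frac{\dd\mathcal{L}}{\dd\theta}(\tilde{X}_t)$ with $\eta_t = \eta\,\dd t$, I divide by $\dd t$ and let $\dd t\to 0$ to get the gradient flow $\frac{\dd\theta_t}{\dd t} = -\eta\,\frac{\dd\mathcal{L}}{\dd\theta}(\tilde{X}_t) = -\frac{\eta}{N}\nabla_\theta^T f_t(\tilde{X}_t)\,\nabla_f\mathcal{L}(\tilde{X}_t)$, which is \eqref{eq:supp-continuous-sgd}. Finally, since $f_t(X) = f_{\theta_t}(X)$ depends on $t$ only through $\theta_t$, the chain rule gives $\frac{\dd f_t}{\dd t}(X) = \nabla_\theta f_t(X)\,\frac{\dd\theta_t}{\dd t}$; substituting the expression just obtained for $\dot\theta_t$ yields \eqref{eq:supp-adv-dyn}. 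Specializing $\nabla_f\mathcal{L}(\tilde{X}_t) = f_t(\tilde{X}_t) - Y$ for the squared loss then recovers \eqref{eq:adv-dyn-l2}, which is what the later analysis uses.

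The main obstacle --- really the only point needing care --- is the treatment of the inner maximization: a fully rigorous statement needs the maximizer $r_i(\theta)$ to be (locally) unique and the loss smooth enough for Danskin's theorem, and since the attack only approximately solves the inner problem in practice, the identity $\frac{\dd\mathcal{L}}{\dd\theta} = \frac{1}{N}\nabla_\theta^T f(\tilde{X})\nabla_f\mathcal{L}(\tilde{X})$ is best read as the definition of the update actually implemented, namely the gradient taken with $\tilde{X}_t$ frozen, exactly as in Algorithm \ref{alg:awt}. A secondary, entirely standard caveat is that the $\dd t\to 0$ passage is the usual gradient-flow idealization of SGD, taken (as noted in the main text) under the unbounded-domain assumption so that no projection or clipping step breaks the chain rule.
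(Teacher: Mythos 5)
Your proposal is correct and follows essentially the same route as the paper's proof: freeze the adversarial batch $\tilde{X}_t$ produced by the attack, write the discrete update $\theta_{t+\dd t}=\theta_t-\eta_t\frac{\partial\mathcal{L}}{\partial\theta}(\tilde{X}_t)$ with $\eta_t=\eta\,\dd t$, let $\dd t\to 0$, and apply the chain rule to get the evolution of $f_t(X)$. Your added discussion of Danskin's theorem makes explicit a point the paper leaves implicit (why the $\theta$-dependence of the maximizer can be ignored), which is a welcome clarification but not a different argument.
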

\begin{proof}
Note that at time $t$ adversarial training consists of an attack step and parameter update step. To be precise, for some chosen strong attack algorithm, we first generate the set of adversarial examples $\tilde{X}_t$, then update the parameter according to
\begin{equation}\label{eq:supp-sgd-dt}
\begin{split}
    \theta_{t+dt} & =\theta_t-\eta_t\frac{\partial\mathcal{L}}{\partial\theta_t}(\tilde{X}_t)\\
    & =\theta_t-\frac{\eta_t}{N}\nabla_\theta^Tf_t(\tilde{X}_t)\nabla_{f}\mathcal{L}(\tilde{X}_t)
\end{split}
\end{equation}
If we take the infinitesimal learning rate to be\footnote{The discrete parameter update corresponds to the case when $dt=1$.} $\eta_t=\eta dt$ and taking the limit $dt\to 0$, we obtain the continuous gradient descent as in Equation \eqref{eq:supp-continuous-sgd}. The evolution of $f_t$ in Equation \eqref{eq:supp-adv-dyn} is a direct result by chain rule. 
\end{proof}
We consider the following optimization problem to find AWT:
\begin{equation}\label{eq:supp-awt-op}
    \min_{m}\mathcal{L}_{awt}=
        \frac{1}{N}\norm{f_0(\tilde{X}_0)-f_0^s(\tilde{X}^s_0)}^2+\frac{\gamma^2}{N^2}\norm{\Theta_0(X, \tilde{X}_0)-\Theta^s_0(X, \tilde{X}_0^s)}_F^2
\end{equation}
where $\Theta(X,Y)=\nabla_\theta f(X)\nabla_\theta f^T(Y)$ is the empirical neural tangent kernel and sup-script $s$ represents quantities involving sparse structure. 
\begin{thm}[\textbf{Existence of adversarial winning ticket}]\label{thm:supp-main}
Let $f_\theta(x)$ denote the dense network function. Suppose $f_\theta$ has identical number of neurons for each layer, i.e. $n_1=n_2=\cdots=n_L=n$ and assume $n$ is large enough. Denote $f^s_{m\odot\theta}(x)$ the corresponding sparse network with $1-p$ weights being pruned. Assume $f$ and $f^s$ have bounded first and second order derivatives with respect to $x$, i.e.
\begin{align*}
    &\max_{t, x}\big\{\norm{\partial_{x} f_t}_q, \norm{\partial_{x} f^s_t}_q\big\}\le C_{1,q}\\
    &\max_{t, x}\big\{\norm{\partial_{xx}^2f_t}_{p,q}, \norm{\partial_{xx}^2f^s_t}_{p,q}\big\}\le C_{2,q}
\end{align*}
where we choose an $\ell_p$ attack to generate adversarial examples such that $q$ is the conjugate of $p$ in the sense of $1/p+1/q=1$.\footnote{If $p=\infty$, we take $q=1$.} Denote the optimal loss value for AWT optimization problem \eqref{eq:awt-op} to be $\mathcal{L}_{awt}^*=\alpha^2$. Then for all $t\le T$ with $T$ the stop time, with learning rate $\eta=O(T^{-1})$, we have
\begin{equation}\label{eq:supp-main}
    \underset{{x\in\mathcal{D}}}{\mathbb{E}}\norm{f_t(x)-f^s_t(x)}^2\le 4(\alpha+4C_q\varepsilon)^2
\end{equation}
where $C_q=C_{1,q}+\varepsilon C_{2,q}$ is a constant. 
\end{thm}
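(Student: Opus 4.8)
The plan is to run a Gr\"onwall-type differential-inequality argument on the vectorized output gap $\Delta_t:=f_t(X)-f_t^s(X)\in\mathbb{R}^{kN}$, whose squared norm equals $N\cdot\mathbb{E}_{x\in\mathcal{D}}\norm{f_t(x)-f_t^s(x)}^2$. Specializing the dynamics of Theorem~\ref{thm:supp-dynamics} to the squared loss and subtracting the dense and sparse output ODEs gives, with the shorthand $K_t:=\Theta_t(X,\tilde X_t)$, $K_t^s:=\Theta_t^s(X,\tilde X_t^s)$, $u_t:=f_t(\tilde X_t)-Y$, $u_t^s:=f_t^s(\tilde X_t^s)-Y$,
\begin{equation*}
\frac{\dd\Delta_t}{\dd t}=-\frac{\eta}{N}\Big(K_t(u_t-u_t^s)+(K_t-K_t^s)\,u_t^s\Big).
\end{equation*}
Two things on the right must be controlled: the \emph{residual gap} $u_t-u_t^s$ and the \emph{kernel gap} $K_t-K_t^s$, and the AWT objective $\mathcal{L}_{awt}$ is designed precisely so that both are small at $t=0$.

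First I would trade adversarial inputs for clean ones. Since each coordinate block of $\tilde X_t$ (resp.\ $\tilde X_t^s$) lies within an $\ell_p$-ball of radius $\varepsilon$ about $x_i$, a second-order Taylor expansion in $x$ bounded by $C_{1,q}$ and $C_{2,q}$ gives $\norm{f_t(\tilde X_t)-f_t(X)}\le\sqrt{N}\,C_q\varepsilon$ (and the same for $f^s$), with $C_q=C_{1,q}+\varepsilon C_{2,q}$. Hence $u_t-u_t^s=\Delta_t+\delta_t$ with $\norm{\delta_t}\le 2\sqrt{N}\,C_q\varepsilon$. Applying the same estimate at $t=0$ and using that the target term of $\mathcal{L}_{awt}^*=\alpha^2$ forces $\tfrac{1}{\sqrt{N}}\norm{f_0(\tilde X_0)-f_0^s(\tilde X_0^s)}\le\alpha$ yields the initial bound $\tfrac{1}{\sqrt{N}}\norm{\Delta_0}\le\alpha+2C_q\varepsilon$.

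Next I would freeze the kernels. Under the large-width hypothesis the empirical tangent kernels stay within an $o(1)$-neighbourhood of their $t=0$ values throughout training (the standard NTK lazy-training estimate), so $\sup_{t\le T}\norm{K_t}_{\mathrm{op}}=O(N)$ and $\norm{K_t-K_t^s}_{\mathrm{op}}$ stays close to $\norm{\Theta_0(X,\tilde X_0)-\Theta_0^s(X,\tilde X_0^s)}_{\mathrm{op}}$, up to the $o(1)$ kernel drift and an $O(\sqrt{N}\,C_q\varepsilon)$ term from replacing $\tilde X_t$ by $\tilde X_0$ inside the kernel. The kernel-distance term of $\mathcal{L}_{awt}^*$ bounds $\tfrac{1}{N}\norm{\Theta_0(X,\tilde X_0)-\Theta_0^s(X,\tilde X_0^s)}_F\le\alpha/\gamma$, which for suitable $\gamma$ is dominated by the remaining terms, and weight decay together with finiteness of the data keep $\sup_{t\le T}\norm{u_t^s}=O(\sqrt{N})$. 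Feeding these into $\tfrac{\dd}{\dd t}\norm{\Delta_t}^2=2\langle\Delta_t,\tfrac{\dd\Delta_t}{\dd t}\rangle$ and using the crude estimate $|\langle\Delta_t,K_t\Delta_t\rangle|\le\norm{K_t}_{\mathrm{op}}\norm{\Delta_t}^2$ --- the only one available, since the MTK $K_t$ is not symmetric and hence yields no dissipativity --- produces a scalar inequality $\tfrac{\dd}{\dd t}\norm{\Delta_t}\le \tfrac{\eta}{N}\norm{K_t}_{\mathrm{op}}\norm{\Delta_t}+\tfrac{\eta}{N}\big(\norm{K_t}_{\mathrm{op}}\norm{\delta_t}+\norm{K_t-K_t^s}_{\mathrm{op}}\norm{u_t^s}\big)$. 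Integrating over $[0,T]$ by Gr\"onwall, with $\eta=O(T^{-1})$ chosen so that $\exp\!\big(\tfrac{\eta T}{N}\sup_t\norm{K_t}_{\mathrm{op}}\big)\le 2$, gives $\tfrac{1}{\sqrt{N}}\norm{\Delta_t}\le 2(\alpha+4C_q\varepsilon)$ for all $t\le T$ once the $o(1)$ width-error and the $O(\alpha/\gamma)$ kernel term are absorbed into the slack of~\eqref{eq:supp-main}; squaring yields the claimed bound.

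\textbf{Main obstacle.} The delicate step is the kernel analysis: since $K_t=\Theta_t(X,\tilde X_t)$ is non-symmetric and varies with $t$ both through the weights and through the moving adversarial points, there is no positive-definiteness to lean on for a contraction, and one must instead make rigorous that (i) the tangent kernel is essentially constant in the overparameterized regime and (ii) evaluating it at $\tilde X_t$ instead of $\tilde X_0$ perturbs it only by $O(\varepsilon)$, and then carefully bookkeep all constants so they assemble into exactly the prefactor $4$ and the slack $4C_q\varepsilon$. A secondary subtlety is that $\mathcal{L}_{awt}$ controls the target and kernel mismatch at the \emph{adversarial} inputs while the conclusion concerns \emph{clean} inputs, which is why the Taylor step must be invoked both at initialization and along the whole trajectory.
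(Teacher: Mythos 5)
Your proposal follows essentially the same route as the paper's own proof: the same decomposition of the output-gap dynamics into a kernel-gap term and a residual-gap term, the same Taylor/derivative-bound step (the paper's auxiliary lemma, which gives $2C_q\varepsilon$ per sample to account for the $k$-step attack path, versus your $C_q\varepsilon$ --- a harmless constant) to pass between adversarial and clean inputs both at $t=0$ and along the trajectory, the same use of $\mathcal{L}_{awt}^*=\alpha^2$ plus NTK width-stability to control the kernel gap, and the same $\eta=O(T^{-1})$ mechanism to cap the accumulated drift at a factor $2$ of the initial gap. The only cosmetic difference is that you integrate a continuous Gr\"onwall inequality, while the paper runs a discrete induction with the envelope $\big(1+\tfrac{t}{T}\big)\sqrt{N}\big(\alpha+4C_q\varepsilon\big)$; the substance of the argument is identical.
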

In order to prove the theorem, we need the following lemma of estimation of error bound.

\begin{lem}\label{lem:bound-value}
For any $1<p\le\infty$, assume an $k$ iterative $\ell_p$ attack algorithm updates as $\tilde{x}_0=x,\  \tilde{x}_t=\tilde{x}_{t-1}+r_t$ with $\norm{r_t}_p\le\delta$ for any $1\le t\le k$ and with total allowed perturbation strength $\norm{\sum r_j}_p\le\varepsilon$. Assume $k\delta\le 2\varepsilon$. If the neural network function $f$ has bounded first and second order derivative with respect to $x$, i.e. $\norm{\partial_{x} f}_q\le C_{1,q}, \norm{\partial_{xx}^2f}_{p,q}\le C_{2,q}$, where $q$ be the conjugate of $p$ such that $1/p+1/q=1$. Then for any adversarial example $\tilde{x}$ generated by the attack algorithm, we have
\begin{equation}\label{eq:bound}
    |f(\tilde{x})-f(x)|\le 2\varepsilon C_{1,q}+2\varepsilon^2C_{2,q}=2\varepsilon C_q
\end{equation}
\end{lem}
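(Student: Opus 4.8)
The plan is to estimate $|f(\tilde x) - f(x)|$ by tracking the increments $f(\tilde x_t) - f(\tilde x_{t-1})$ along the attack trajectory and applying a second‑order Taylor expansion of $f$ at each step. First I would write, for each $1 \le t \le k$,
\begin{equation*}
    f(\tilde x_t) - f(\tilde x_{t-1}) = \partial_x f(\tilde x_{t-1})\, r_t + \tfrac12\, r_t^\top \partial^2_{xx} f(\xi_t)\, r_t
\end{equation*}
for some point $\xi_t$ on the segment between $\tilde x_{t-1}$ and $\tilde x_t$ (Taylor's theorem with Lagrange-type remainder, applied coordinate‑wise to the vector‑valued $f$). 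Summing over $t$ and using $\tilde x = \tilde x_k$, $x = \tilde x_0$, this telescopes to
\begin{equation*}
    f(\tilde x) - f(x) = \sum_{t=1}^k \partial_x f(\tilde x_{t-1})\, r_t + \tfrac12 \sum_{t=1}^k r_t^\top \partial^2_{xx} f(\xi_t)\, r_t .
\end{equation*}
The second‑order term is the easy one: by the operator‑norm bound $\norm{\partial^2_{xx} f}_{p,q} \le C_{2,q}$ and Hölder duality ($q$ conjugate to $p$), each summand is at most $\tfrac12 C_{2,q} \norm{r_t}_p^2$, so the whole sum is bounded by $\tfrac12 C_{2,q} \sum_t \norm{r_t}_p^2 \le \tfrac12 C_{2,q}\, \delta \sum_t \norm{r_t}_p \le \tfrac12 C_{2,q}\, \delta (k\delta)$. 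Invoking the hypothesis $k\delta \le 2\varepsilon$ twice gives a bound of $2\varepsilon^2 C_{2,q}$, which is exactly the second term in \eqref{eq:bound}.

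The first‑order term is where the real care is needed, and I expect it to be the main obstacle, because the individual steps $r_t$ are only controlled in size by $\delta$ — if I bounded $\sum_t \norm{\partial_x f(\tilde x_{t-1})}_q \norm{r_t}_p$ termwise I would get $C_{1,q} k\delta \le 2\varepsilon C_{1,q}$, which already suffices, but one should check whether the intended argument instead exploits the constraint $\norm{\sum_t r_t}_p \le \varepsilon$ on the \emph{total} displacement. The clean route is the termwise one: $\norm{\sum_{t} \partial_x f(\tilde x_{t-1}) r_t} \le \sum_t \norm{\partial_x f(\tilde x_{t-1})}_q \norm{r_t}_p \le C_{1,q}\sum_t \norm{r_t}_p \le C_{1,q}\,\delta k \le 2\varepsilon C_{1,q}$, using $k\delta \le 2\varepsilon$ once more. (If one wanted the tighter constant $\varepsilon C_{1,q}$ one would have to assume $\partial_x f$ is roughly constant along the short trajectory and use $\norm{\sum r_t}_p \le \varepsilon$ directly, but the stated bound only needs the coarser estimate.)

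Combining the two pieces via the triangle inequality yields $|f(\tilde x) - f(x)| \le 2\varepsilon C_{1,q} + 2\varepsilon^2 C_{2,q}$, and factoring out $2\varepsilon$ and recalling the definition $C_q = C_{1,q} + \varepsilon C_{2,q}$ gives $|f(\tilde x) - f(x)| \le 2\varepsilon C_q$, as claimed. The only subtleties to handle carefully in the write‑up are: the vector‑valued Taylor remainder (done componentwise, or via the integral form $\int_0^1 (1-s)\, r_t^\top \partial^2_{xx}f(\tilde x_{t-1}+s r_t) r_t\, ds$, which avoids choosing a common $\xi_t$ for all output coordinates), the precise meaning of the mixed norm $\norm{\cdot}_{p,q}$ on the Hessian tensor so that $\norm{r^\top \partial^2_{xx}f\, r} \le \norm{\partial^2_{xx}f}_{p,q}\norm{r}_p^2$, and bookkeeping the repeated use of $k\delta \le 2\varepsilon$ so the final constants come out exactly as stated.
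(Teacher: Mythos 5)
Your proposal is correct and follows essentially the same argument as the paper: a per-step second-order Taylor expansion along the attack trajectory, H\"older and $(p,q)$-operator-norm bounds on the two terms, summation, and the hypothesis $k\delta\le 2\varepsilon$ to obtain $2\varepsilon C_{1,q}+2\varepsilon^2 C_{2,q}=2\varepsilon C_q$. The only (harmless) difference is that you apply the uniform bound $\norm{\partial_x f}_q\le C_{1,q}$ directly at each step, whereas the paper redundantly propagates the gradient bound from $\tilde{x}_0$ via the mean-value theorem before arriving at the same constants.
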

\begin{proof}[Proof of Lemma:]
Consider the series of second order Taylor expansions for any $1\le t\le k$
\begin{equation}
    f(\tilde{x}_t)-f(\tilde{x}_{t-1})=\partial_x f(\tilde{x}_{t-1})r_t+\frac{1}{2}r_t^T\partial^2_{xx}f(\xi_{t-1})r_t
\end{equation}
We have
\begin{equation}
\begin{split}
        |f(\tilde{x}_{t})-f(\tilde{x}_{t-1})|
    \le & \norm{\partial_xf(\tilde{x}_{t-1})r_t}+\norm{\frac{1}{2}r_t^T\partial^2_{xx}f(\xi_{t-1})r_t}\\
    \le & \norm{\partial_xf(\tilde{x}_{t-1})}_q\norm{r_t}_p+\frac{1}{2}\norm{r_t}_p\norm{\partial^2_{xx}f(\xi_{t-1})r_t}_q\\
    \le & \norm{\partial_xf(\tilde{x}_{t-1})}_q\delta+\frac{1}{2}\delta\norm{\partial^2_{xx}f(\xi_{t-1})}_{p,q}\norm{r_t}_p\\
    \le & C_{1,q}\delta+\frac{1}{2}C_{2,q}\delta^2
\end{split}
\end{equation}
where we use H\"{o}lder inequality in the second step and definition of $(p,q)$ norm in the third step. On the other hand, by Mean-value theorem we have
\begin{equation}
    \partial_xf(\tilde{x}_{t})=\partial_xf(\tilde{x}_{t-1})+\partial^2_{xx}f(\eta_{t-1})r_{t}
\end{equation}
Hence we have
\begin{equation}
    \begin{split}
        \norm{\partial_xf(\tilde{x}_{t})}_q&\le \norm{\partial_xf(\tilde{x}_{t-1})}_q+\norm{\partial^2_{xx}f(\eta_{t-1})r_{t}}_q\\
        &\le\norm{\partial_xf(\tilde{x}_{t-1})}_q+\norm{\partial^2_{xx}f(\eta_{t-1})}_{p,q}\norm{r_t}_p\\
        &\le C_{1,q}+C_{2,q}\delta
    \end{split}
\end{equation}
where we use Minkowski inequality in the first step. Together, we have the following estimation:
\begin{equation}
    \begin{split}
        |f(\tilde{x})-f(x)|
        =&|f(\tilde{x}_k)-f(\tilde{x}_0)|\\
        \le&\sum_{t=1}^k |f(\tilde{x}_t)-f(\tilde{x}_{t-1})|\\
        \le&\delta\sum_{t=1}^k\norm{\partial_xf(\tilde{x}_{t-1})}_q+\frac{1}{2}\delta^2\sum_{t=1}^k\norm{\partial^2_{xx}f(\xi_{t-1})}_{p,q}\\
        =&\delta\bigg(\sum_{t=1}^k\norm{\partial_xf(\tilde{x}_0)}_q+C_{2,q}\delta (t-1)\bigg)+\frac{1}{2}\delta^2kC_{2,q}\\
        =&k\delta C_{1,q}+\frac{1}{2}k^2\delta^2C_{2,q}
    \end{split}
\end{equation}
Then Equation \eqref{eq:bound} is valid if we plug in the assumption $k\delta\le 2\varepsilon$.
\end{proof}
\paragraph{Remark 1:} We did not specify any particular algorithm in the presentation of our lemma. In practice, $k$ steps PGD attacks is the common choice for inner subproblem of adversarial training. For $k$ steps PGD attack, the number of attack iteration is usually taken to be $7$ (ImageNet) or $20$ (MNIST/CIFAR-10), so we may assume $k=\Omega(1)$ for future use. The assumption $k\delta\le 2\varepsilon$ is also for practical consideration, where we usually choose the step size $\delta$ approximately to be $2\varepsilon/k$ as suggested in \cite{madry}.

\paragraph{Remark 2:} We might get more accurate bound by looking deeper into the dynamics of continuous first-order attack:
\begin{equation}
    \frac{\dd x_t}{\dd t}=\frac{\dd \ell}{\dd x}(f_t(x_t)),\quad\quad x_0=x
\end{equation}
The analysis of the above differential equations would possibly weaken the current assumption on derivatives. We would leave this as a future work.

Now we are ready to prove the main theorem.
\begin{proof}[Proof of Theorem]
Suppose $\mathcal{L}_{awt}^*=\alpha^2$, then we have
\begin{align*}
    \norm{f_0(\tilde{X}_0)-f_0^s(\tilde{X}_0^s)}
    \le \sqrt{N}\alpha,\quad\quad\norm{\Theta_0(X,\tilde{X}_0)-\Theta_0^s(X,\tilde{X}^s_0)}_F\le N\frac{\alpha}{\gamma}
\end{align*}
To bound the distance between dense and sparse output, we do induction on time $t$ and prove the following stronger estimation
\begin{equation}
    \norm{f_{t}(X)-f_{t}^s(X)}\le \bigg(1+\frac{t}{T}\bigg)\sqrt{N}\big(\alpha+4C_q\varepsilon\big)
\end{equation}
where $C_q=C_{1,q}+\varepsilon C_{2,q}$ and $C_{1,q}$ and $C_{2,q}$ are bounds of first and second order derivative of $f$. Note that at $t=0$, we have
\begin{equation}
\begin{split}
    \norm{f_0(X)-f_0^s(X)}
    \le & \norm{f_0(\tilde{X}_0)-f_0^s(\tilde{X}_0^s)}+\norm{f_0(\tilde{X}_0)-f_0(X)}+\norm{f_0^s(\tilde{X}^s_0)-f_0^s(X)}\\
    \le & \sqrt{N}\big(\alpha+4C_q\varepsilon\big)
\end{split}
\end{equation}
At time $t$, assume we have
\begin{equation}
    \norm{f_{t}(X)-f_{t}^s(X)}\le \bigg(1+\frac{t}{T}\bigg)\sqrt{N}\big(\alpha+4C_q\varepsilon\big)
\end{equation}
Then according to the dynamical equation \eqref{eq:supp-adv-dyn}, we have
\begin{equation}
\begin{split}
    f_{t+1}(X)&=f_t(X)-\frac{\eta}{N}\Theta(X,\tilde{X}_t)\big(f_t(\tilde{X}_t)-Y\big)\\
    f_{t+1}^s(X)&=f^s_t(X)-\frac{\eta}{N}\Theta^s(X,\tilde{X}^s_t)\big(f^s_t(\tilde{X}^s_t)-Y\big)\
\end{split}
\end{equation}
Then
\begin{equation}\label{eq:t-bound}
        \norm{f_{t+1}(X)-f^s_{t+1}(X)}
    \le \norm{f_{t}(X)-f_{t}^s(X)}+\frac{\eta}{N}\norm{(\Theta_t-\Theta_t^s)(f_t(\tilde{X}_t)-Y)} +\frac{\eta}{N}\norm{\Theta_t^s(f_t(\tilde{X}_t)-f_t^s(\tilde{X}_t^s))}
\end{equation}
Let $K_t(X,X)=\nabla_\theta f_t(X)\nabla_\theta^Tf_t(X)$ be the empirical neural tangent kernel at time $t$, then according to Theorem 2.1 in \citet{EvolveLinear}, $\norm{K_t-K_0}_F=\frac{C_K}{\sqrt{n}}$, where $n$ is the number of neurons in each layer. Therefore we have
\begin{equation}
\begin{split}
    &\norm{\Theta_t(X,\tilde{X}_t)-\Theta_t^s(X,\tilde{X}^s_t)}_F\\
\le &\norm{\Theta_t(X,\tilde{X}_t)-\Theta_0(X,\tilde{X}_0)}_F
+\norm{\Theta_t^s(X,\tilde{X}^s_t)-\Theta_0^s(X,\tilde{X}_0)}_F
+\norm{\Theta_0(X,\tilde{X}_0)-\Theta_0^s(X,\tilde{X}_0^s)}_F\\
\le&\norm{K_t(X,X)-K_0(X,X)}_F+\norm{K_t^s(X,X)-K_0^s(X,X)}_F 
+\norm{\nabla_\theta f_t(X)\nabla_{\theta,x}^2f(\xi)R_t} \\
&+\norm{\nabla_\theta f^s_t(X)\nabla_{\theta,x}^2f(\tilde{\xi})R^s_t}_F
+ N\frac{\alpha}{\gamma}\\
\le & \frac{C_K}{\sqrt{n}}\bigg(1+\frac{1}{\sqrt{p}}\bigg)+2N\varepsilon C_{1,q} C_{2,q}+N\frac{\alpha}{\gamma}
\end{split}
\end{equation}
And by Lemma \eqref{lem:bound-value}, we have
\begin{equation}
\begin{split}
    \norm{f(\tilde{X}_t)-f^s(\tilde{X}_t^s)}
    \le &
    \norm{f_t(X)-f_t^s(X)}+\norm{f(\tilde{X}_t)-f_t(X)}
    +\norm{f^s_t(\tilde{X}_t)-f_t^s(X)}\\
    &\le  \bigg(1+\frac{t}{T}\bigg)\sqrt{N}\big(\alpha+4C_q\varepsilon\big)+4\sqrt{N}C_q\varepsilon
\end{split}
\end{equation}
Then Equation \eqref{eq:t-bound} reads
\begin{equation}\label{eq:ugly}
    \begin{split}
        &\norm{f_{t+1}(X)-f^s_{t+1}(X)}\\
    \le & \bigg(1+\frac{t}{T}\bigg)\sqrt{N}\big(\alpha+4C_q\varepsilon\big)
    + \frac{\eta}{N}\bigg(\frac{C_K}{\sqrt{n}}\bigg(1+\frac{1}{\sqrt{p}}\bigg)
    +2N\varepsilon C_{1,q} C_{2,q}+N\frac{\alpha}{\gamma}\bigg)\sqrt{N}c\\
    &+\frac{\eta}{N}NC_{2,q}\bigg[\bigg(1+\frac{t}{T}\bigg)\sqrt{N}\big(\alpha+4C_q\varepsilon\sqrt{N}\big)+4\sqrt{N}C_q\varepsilon\bigg]
    \end{split}
\end{equation}
where $c=\max\{|f(x)-y|:x\in X\}$ is bounded essentially. Take
\begin{equation}
    \eta=\min\bigg\{\frac{1}{T(2+\frac{c}{\gamma})},\quad \frac{4C_q\varepsilon}{T\big(2cC_{1,q}C_{2,q}+8C_q)\big)}\bigg\}
\end{equation}
One can check that if $N$ is sufficiently large such that $\frac{C_K}{\sqrt{Nn}(1+\frac{1}{\sqrt{p}})}\to 0$, then Equation \eqref{eq:ugly} reads
\begin{equation}
    \norm{f_{t+1}(X)-f^s_{t+1}(X)}\le \bigg(1+\frac{t+1}{T}\bigg)\sqrt{N}\big(\alpha+4C_q\varepsilon\big)
\end{equation}
which completes the proof.
\end{proof}

\section{Discussion on Possible Extensions to Other Loss Functions}\label{sec:extensions}
We only consider the squared loss for simplicity in our main theorem. However, it is possible to extend our result to the cross-entropy loss case. Actually, we use cross-entropy loss in our experiment, so we have already checked our method empirically.

To see how our method works theoretically for cross-entropy loss, let 
\begin{equation}\label{lem:cross-entropy}
    \ell_{ce}(f(x), y)=-\log\frac{e^{f_y}}{\sum_i e^{f_i}}
\end{equation}
be the cross-entropy loss, where $f(x)=[\cdots,f_j(x),\cdots]^T$ is the model output. Note that we have
\begin{equation}
    \frac{\partial\ell_{ce}}{\partial f_y}=\frac{e^{f_y}}{\sum_i e^{f_i}}-1,\quad\quad
    \frac{\partial\ell_{ce}}{\partial f_j}=\frac{e^{f_j}}{\sum_i e^{f_i}}, j\neq y
\end{equation}
Therefore,
\begin{equation}
    \norm{\nabla_f\ell_{ce}(f(x),y)-\nabla_{f^s}\ell_{ce}(f(x),y)}^2
    =\sum_j\bigg(\frac{e^{f_j}}{\sum_i e^{f_i}}-\frac{e^{f^s_j}}{\sum_i e^{f^s_i}}\bigg)^2
\end{equation}
In practice, we may expect the change of model outputs is usually larger than the outputs after cross-entropy loss, i.e. 
\begin{equation}
    \forall j, \bigg(\frac{e^{f_j}}{\sum_i e^{f_i}}-\frac{e^{f^s_j}}{\sum_i e^{f^s_i}}\bigg)^2\le [f_i(x)-f^s_i(x)]^2 
\end{equation}
Hence, this implies 
\begin{equation}\label{eq:bound-ce}
    \norm{\nabla_f\ell_{ce}(f(x),y)-\nabla_f\ell_{ce}(f^s(x),y)}\le \norm{f(x)-f^s(x)}
\end{equation}
Recall that the dynamics of adversarial training is
\begin{equation} 
     \frac{\dd f_t}{\dd t}(X)
    =-\eta\Theta_t(X, \tilde{X}_t)\nabla_{f_t}\mathcal{L}(\tilde{X}_t)
\end{equation}
We may expect the optimization problem of finding adversarial winning ticket for general loss function is
\begin{equation}\label{eq:op-ce}
    \min_m\mathcal{L}_{awt}'=\frac{1}{N}\norm{\nabla_{f}\mathcal{L}(\tilde{X}_0)-\nabla_{f^s}\mathcal{\tilde{L}}(\tilde{X}_0')}
    +\frac{1}{N^2}\norm{\Theta_0(X,\tilde{X}_0)-\Theta^s_0(X,\tilde{X}_0')}_F
\end{equation}
Now we compare the optimization problem \eqref{eq:op-ce} with problem \eqref{eq:supp-awt-op}. The first term of problem \eqref{eq:op-ce}, as discussed above, is bounded by $\norm{f_0(X+R_0)-f^s_0(X+R^s_0)}$. This shows that, if we find adversarial winning ticket according to optimization problem \eqref{eq:op-ce}, the resulting training dynamics is bounded by the one we obtained before, so is close to the dynamics of dense network also. This suggests that the optimization problem \eqref{eq:supp-awt-op} is general for both squared loss and cross-entropy loss.

\textbf{Remark:} We may expect that for many other loss functions, the following condition is true
\begin{equation}\label{eq:appendix.cond}
\norm{\nabla_f\ell(f(x),y)-\nabla_{f^s}\ell(f^s(x),y)}\le C\norm{f(x)-f^s(x)}  
\end{equation}
Our method generalizes to the case using any loss function satisfying condition \eqref{eq:appendix.cond}. Actually, for any convex loss function $\ell$, if the second order derivative is bounded, then the above condition \eqref{eq:appendix.cond} is true by Taylor expansion.
\begin{figure*}[htb!]
    \centering
    \subfigure[]{\includegraphics[scale=0.28]{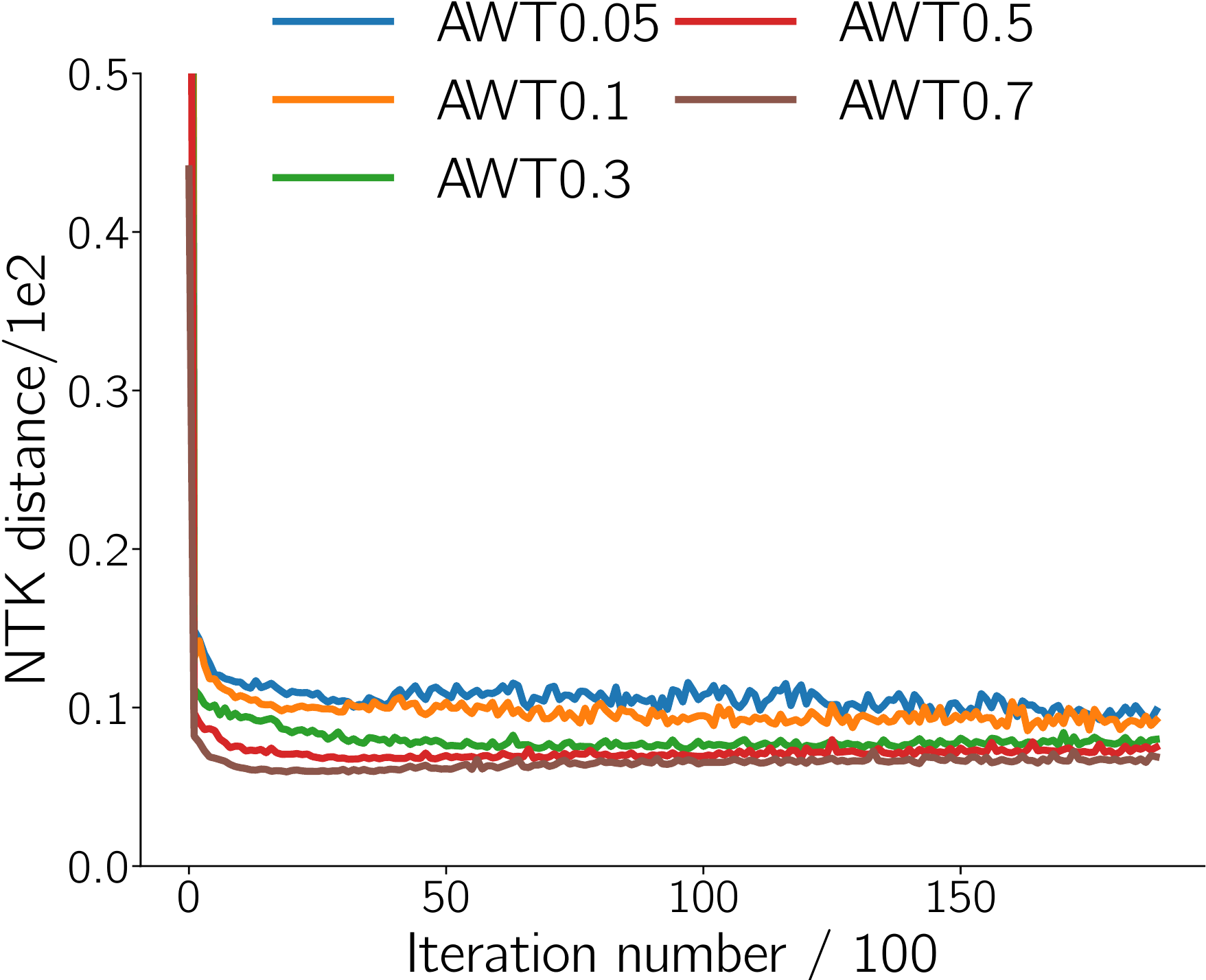}}
    \subfigure[]{\includegraphics[scale=0.28]{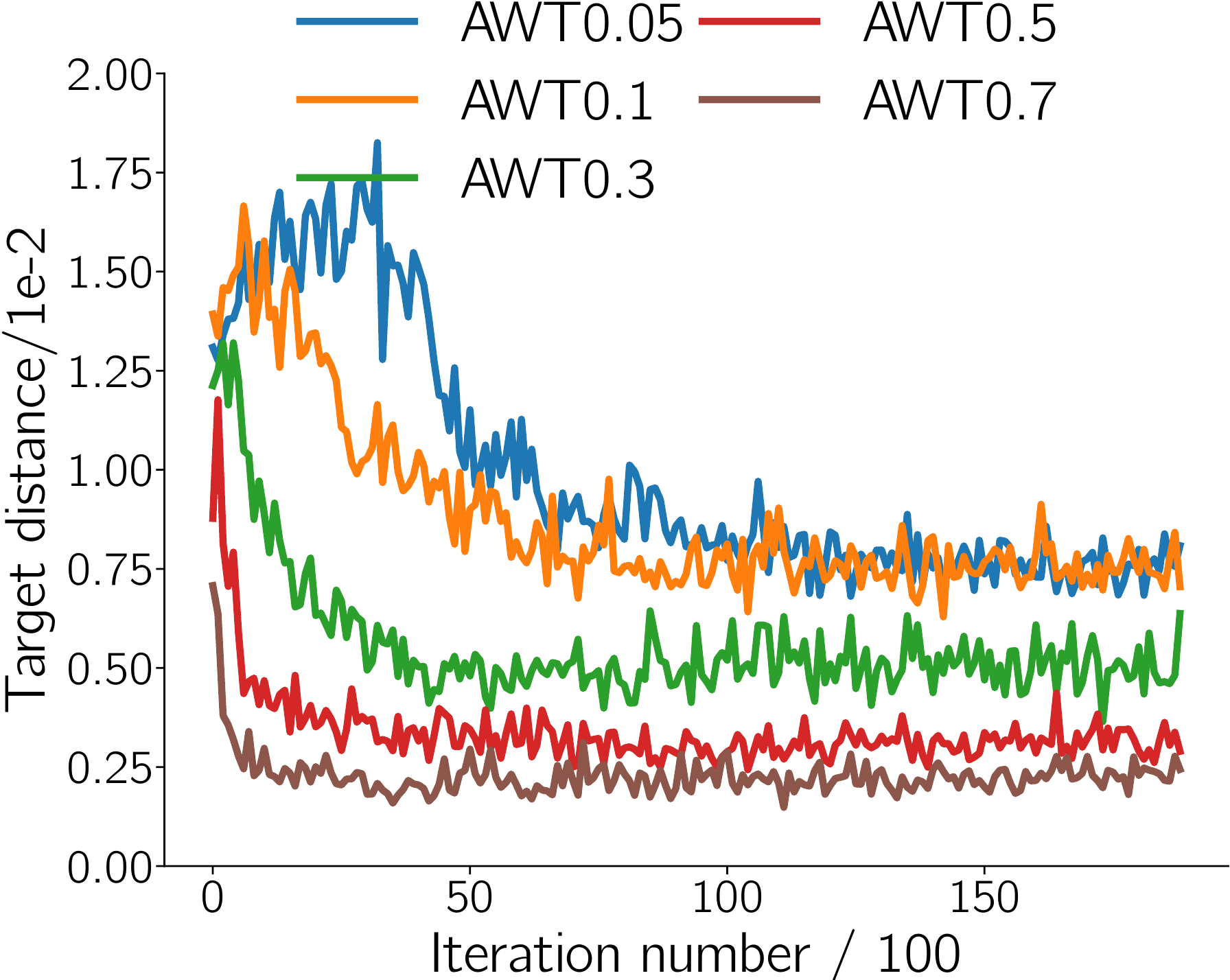}}
     \subfigure[]{\includegraphics[scale=0.28]{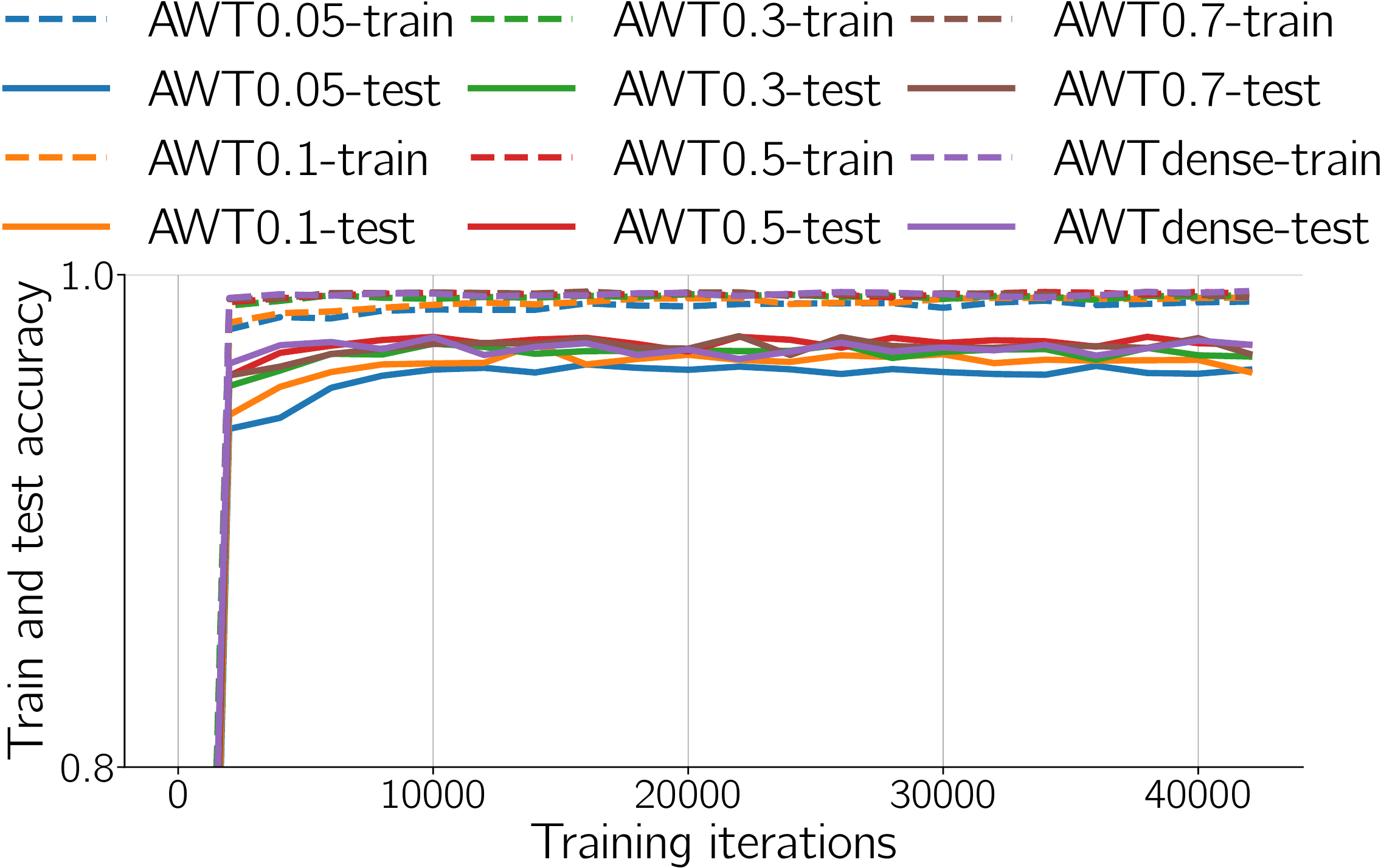}}
    \caption{(a) and (b) are the statistics of AWT in the procedure of searching masks on MNIST with CNN over different density levels. (c) presents the adversarial training and test accuracy curves in the training process}
    \label{fig:kernel-target-distance-AWT-cnn-mnist}
\end{figure*}
\section{More Experimental Results}\label{sec:more-experiment-results}

\subsection{Toy Example}\label{sec: toy}
We provide here a toy example to illustrate our method. Consider a binary classification problem of mixed Gaussian distribution $p(x)=0.5\mathcal{N}(\mu_+,\Sigma_+)+0.5\mathcal{N}(\mu_-,\sigma_-)$ and any linear classifier $C(x)=\mathrm{sgn}(f_\theta(x))=\mathrm{sgn}(\theta\cdot x)$. Let $\mu=0.5(\mu_+-\mu_-)$ be the mean difference and suppose $\Sigma_+=\Sigma_-=\sigma^2I_d$, then the adversarial accuracy with given adversarial perturbation strength $\varepsilon$ can be calculated explicitly using the following result in \cite{xupeng}:
\begin{equation}\label{eq: p-adv}
\begin{split}
     p_{adv-acc}&=1-p_{adv}=p_m+\Phi\bigg(\frac{\theta\cdot\mu}{\norm{\theta}\sigma}-\frac{\varepsilon}{\sigma}\bigg)\\
     &=p_m+\Phi\bigg(\frac{\norm{\mu}}{\sigma}\cos\gamma-\frac{\varepsilon}{\sigma}\bigg)
\end{split}
\end{equation}
where $p_m$ is the misclassification rate, $p_{adv}$ is the probability of the existence of adversarial examples, and $\Phi$ is the cumulative density function of standard normal distribution. This shows the adversarial robustness of $C(x)$ can be measured explicitly by the deflection angle $\gamma$ between $\theta$ and $\mu$. In particular, $p_{adv-acc}$ attains its maximum at $\theta=\mu$, i.e. Bayes classifier is the best classifier in the sense of adversarial robustness. 
 
To illustrate our idea, we randomly generate 5000 data points from $p(x)$, where $\mu_+=[3,0,\cdots,0]=-\mu_-$, $\Sigma_+=\Sigma_-=I_d$ and the dimension is $d=100$. We minimize the loss in equation \eqref{eq:supp-awt-op} to obtain a sparse robust structure with a sparsity level at 10\%, i.e. we only keep 10 nonzero coordinates of $\theta$. Then we adversarially train the AWT to get the sparse robust network. We use SVM as baseline of model accuracy and standard adversarial training as baseline of adversarial accuracy. The result is summarized in table \ref{tab:toy}. 

\begin{table}[h]
    \centering
    \begin{tabular}{c|c|c|c|c}
    \hline
          \quad   & Bayes & SVM   &  Adv.Tr  & AWT \\
           acc.   & 0.999 & 0.995 &  0.995   & 0.995 \\
           ang.   & 0.0   & 0.555 &  0.202   & 0.118 \\
           cos.   & 1.0   & 0.850 &  0.980   & 0.993 \\
           rob.   & 0.843 & 0.714 &  0.823   & 0.838\\
    \hline
    \end{tabular}
    \vspace{2mm}
    
    \caption{'cos.' represents cosine of angle. 'rob.' represents adversarial accuracy.}
    \label{tab:toy}
\end{table}

Table \ref{tab:toy} shows that SVM reaches comparable model accuracy as Bayes model but with a large deflection angle ($0.555\approx 31.8^\circ$), this simulates the case that natural training can reach high accuracy but fails to be adversarial robust. Adversarial training reaches the same model accuracy but also improved adversarial accuracy (0.823), which is very close to the one of Bayes classifier. This is also reflected by the deflection angle ($0.202\approx 11.6^\circ$), which shows a significant decrease when compared with the deflected angle of SVM. Our method (AWT) gets slightly better performance than the dense adversarial training with only 10\% of the weights left. 

\subsection{More Real Data Experiments}
We first present the detailed experimental configurations and then provide the experimental results omited in the main text due to the space limitation. 
\paragraph{Experimental Configuration}
We conduct experiments on standard datasets, including MNIST \citep{lecun1998gradient}, CIFAR-10 and CIFAR-100 \citep{krizhevsky2009learning}. All experiments are performed in JAX \citep{jax2018github}, together with the neural-tangent library \citep{neuraltangents2020}. 

For the neural networks, in this paper, we evaluate our proposed method on two networks: MLP, 6-layer CNN and VGG16. The MLP is comprised of two hidden layers containing 300 and 100 units with rectified linear unit(ReLUs) followed with a linear output layer. The CNN has two convolutional layers with 32 and 64 channels, respectively. Each convoluational layer is followed by a max-pooling layer and the final two layers are fully connected with 1024 and 10/100 hidden nodes, respectively.  For the experiments on VGG16, we explore the possibility of scaling up our method on large-size modern neural networks by using the technique such as sampling on the MTK matrix.  
\begin{figure}[htb!]
    \centering
    \includegraphics[width=0.9\textwidth]{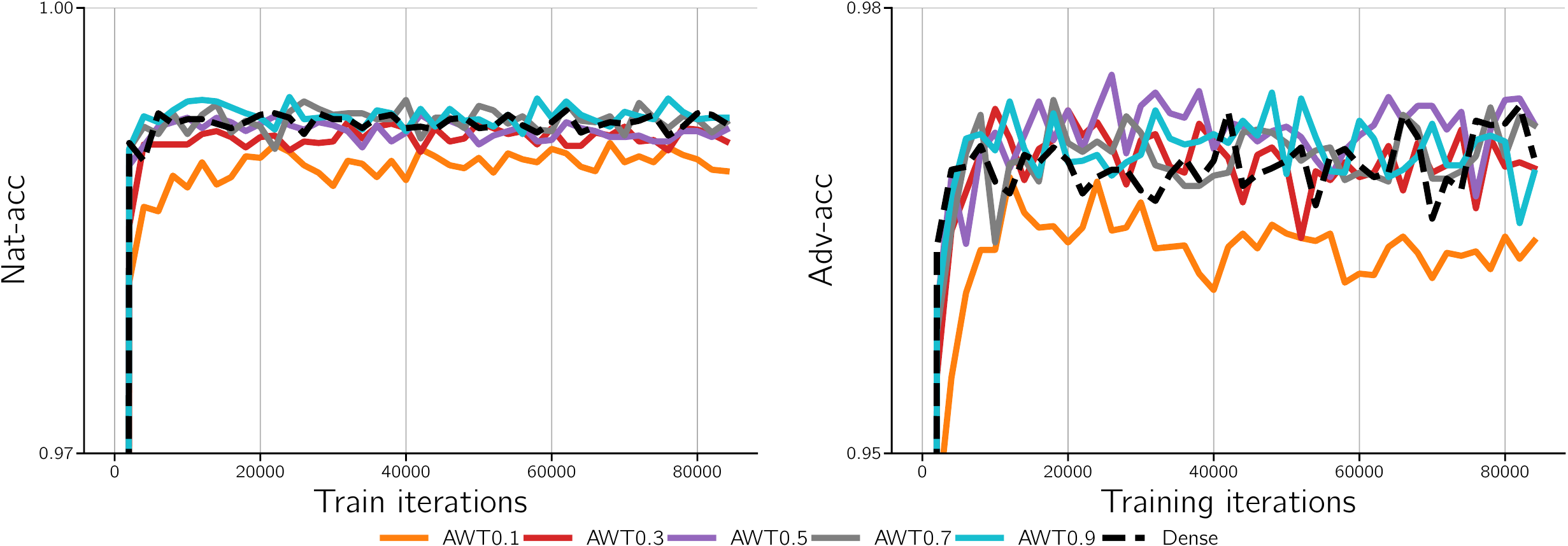}
    \caption{Test accuracy on natural and adversarial examples of CNN trained on MNIST. The density varies in $\{0.1,0.3,0.5,0.7,0.9\}$.}
    \label{fig:acc-mnist-cnn-01-09}
\end{figure}

We evaluate the performance of our method on different density levels. To be precise: 1) For the experiments on MNIST with MLP, we vary the density level in $\{0.018,0.036,0.087,0.169,0.513\}$; For the experiments with CNN, the density level is set to be $[0.05,0.1,0.2,0.3, \ldots, 0.9]$; 3) Since MNIST is much easier to be classified compared with CIFAR10 and CIFAR100, we also evaluate the performance of our method on MNIST with CNN at the density levels $\{0.01, 0.02, \ldots, 0.05\}$.   Each experiment is comprised of two phases. For the experiments on MLP and CNN, in phase one, we run Algorithm 1 for 20 epochs to find the adversarial winning ticket. The weight $\gamma$ of the kernel distance in Eqn.(10) is choosen to be $1e-3$. In phase two, we adversarially train the winning ticket from the original initialization with the cross entropy loss for 100 epochs by using $L_\infty$-PGD attack. The $\epsilon$ in PGD attack in both phase one and two is set to be 0.3 and 8/255 in the experiments on MNIST and CIFAR-10/100, respectively.  In both of these two phases, we adopt adopt Adam \citep{kingma2014adam} to solve the corresponding optimization problem. The learning rate is set to be $5e-4$ and $1e-3$ in phase one and phase two, respectively. The batch size is 64. For the experiments on VGG16, we run phase one for 10 epochs and phase two for 20 epochs. Other settings such as $\epsilon$ are the same as the experiments on MLP and CNN.

\paragraph{Experiments in Introduction} For the experimental results given in the introduction section, we adopt the above MLP network and do nature training. We compare the dynamic preserving abilities of NTT and DNS in pruning with the pruning rate being $0.02$. NTT prunes the network at initialization, while DNS prunes the network during training. In both NTT and DNS, we prune the network in 20 epochs with batch size being 64. And then we fine tune the obtained sparse network for 50 epochs. 

\paragraph{An Ablation Study} To show whether the training dynamics is preserved, instead of only looking at the kernel distance and target distance, we adopt a technique named network grafting \citep{gu2020characterize} to verify whether dynamics are preserved. The idea is if two networks have same dynamics, then one can be grafted/connected onto the other at each layer at any epoch of training without significant error increase. We give the result on MNIST with CNN in Figure \ref{fig:graft}, where the error increases are very small, especially when the densities are low. This verifies the claim. 

\begin{figure}[htb!]
    \centering
    \includegraphics[scale=2]{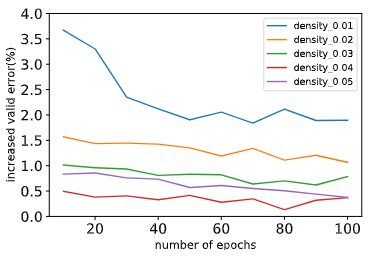}
    \caption{Graft results.}
    \label{fig:graft}
\end{figure}

\paragraph{More Results}
Figure \ref{fig:kernel-target-distance-AWT-cnn-mnist}(a) and (b) are the statistics of AWT, i.e, the kernel and target distances, in the procedure of searching masks on MNIST with cnn over density levels of $\{0.05,0.1,0.2,0.3,\ldots, 0.9\}$. Figure \ref{fig:kernel-target-distance-AWT-cnn-mnist}(c) is the adversarial training and test accuracy curves in the training process (phase two). To make the curves not too crowded, we omit the curves at the density levels of $\{0.2,0.4,0.6,0.8,0.9\}$. We can see that the target and kernel distances can decrease quickly in phase one and training dynamic of the winning ticket in phase two would become closer to the dense network when the density level increases. This is consistent with our theoretical analysis. 
\begin{figure}[htb!]
    \centering
    \includegraphics[width=0.9\textwidth]{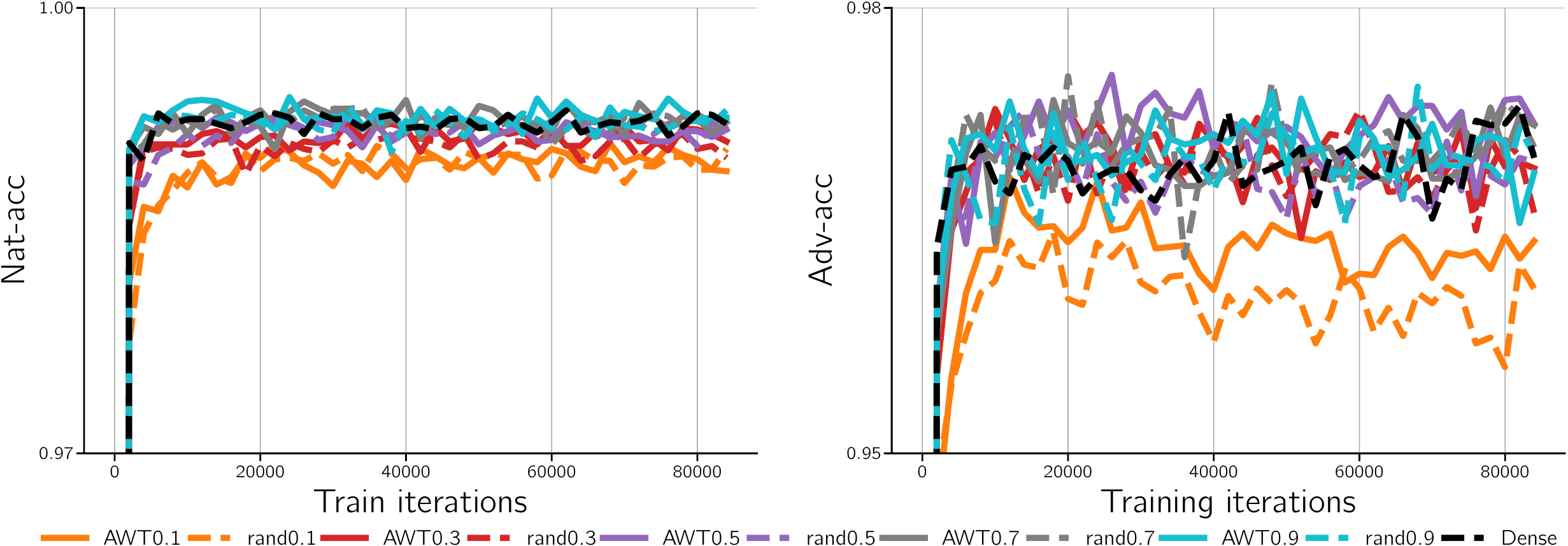}
    \caption{Natural and adversarial test accuracy of the models trained from AWT and random structure on MNIST with CNN. The density varies in $\{0.1,0.3,0.5,0.7,0.9\}$.}
    \label{fig:acc-mnist-cnn-01-09-AWT-rand}
\end{figure}

\begin{figure}[htb!]
    \centering
    \includegraphics[width=0.9\textwidth]{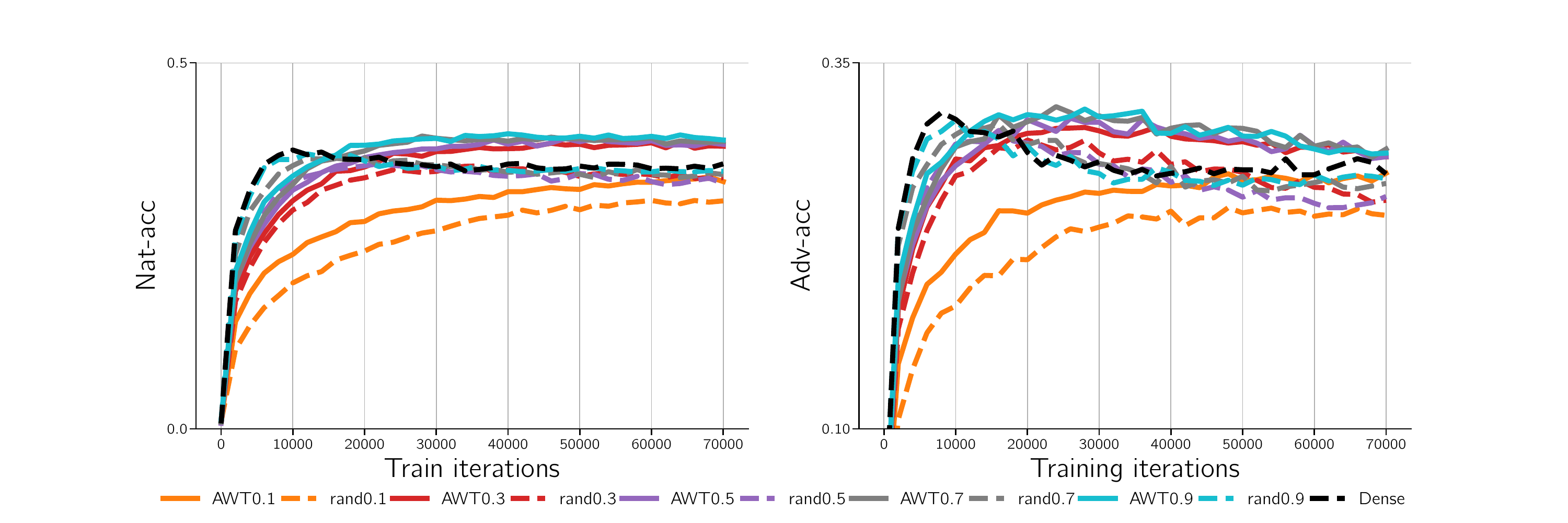}
    \caption{Natural and adversarial test accuracy of the models trained from AWT and random structure on CIFAR100 with CNN. The density varies in $\{0.1, 0.3,0.5,0.7,0.9\}$.}
    \label{fig:acc-cifar100-cnn-01-09-AWT-rand}
\end{figure}
Figure \ref{fig:acc-mnist-cnn-01-09} presents the natural and adversarial testing accuracy of CNN trained on MNIST with the density level varies in $\{0.1,0.2,\ldots,0.9\}$. We can see that when the density level is larger than 0.2, the accuracy is very close to the dense mode. The reason could be that when the density level is larger than 0.2, the model begins to be overparameterized. This can also be seen in Figure \ref{fig:acc-mnist-cnn-01-09-AWT-rand}. That is when the density level is larger than 0.2, there is even no significant difference between the winning ticket an the random structure. That's why we give the results with the density level varying in $\{0.01,0.02,\ldots, 0.05\}$ in the main text.

Figure \ref{fig:acc-cifar100-cnn-01-09-AWT-rand} shows the performance of the models trained on CIFAR100 from our winning ticket and the random structure. We can see that after training, our winning ticket has significantly higher natural and adversarial test accuracy than that of the random structure. In this experiment, all the models cannot achieve the comparable test accuracy on natural examples as ResNet18 reported in the existing studies. The reason is that our model is a 6-layer CNN, whose capacity is much smaller than ResNet18.

\end{appendix}

\end{document}